\documentclass{IEEEtran}
\usepackage{cite}
\usepackage{amsmath,amssymb,amsfonts}
\usepackage{algorithmic}
\usepackage{graphicx}

\usepackage{mathtools}
\usepackage{amsfonts}
\usepackage{graphicx}
\usepackage[T1]{fontenc}
\usepackage{microtype}

\usepackage[standard,amsmath,thmmarks]{ntheorem}
\usepackage{dsfont}

\newcommand\reals{\mathbb{R}}

\newcommand\State{S}
\newcommand\Action{A}
\newcommand\state{s}

\newcommand{\SPSTATE}{\mathcal{\State}}
\newcommand\ACTION{\mathcal{\Action}}
\newcommand\action{a}
\newcommand\discount{\gamma}
\newcommand\policy{\pi}
\newcommand\polPars{\theta}
\newcommand\polParSpace{\Theta}
\newcommand\Reward{R}

\newcommand\Score{\Lambda}
\newcommand{\R}{\mathsf{R}}
\newcommand{\T}{\mathsf{T}}

\newcommand\EXP{\mathbb E}
\newcommand\PR{\mathbb P}
\newcommand\GRAD{\nabla}
\newcommand\EXPA{\EXP_{\Action_t\sim\policy(\State_t)}}
\newcommand\EXPB{\EXP_{\Action_t\sim\policy_\polPars(\State_t)}}

\newcommand{\IND}{\mathds{1}}

\newcommand\PRE[1]{#1^{-}}
\newcommand\POST[1]{#1^{+}}

\newcommand\PRESTATE{\PRE{\mathcal{\State}}}
\newcommand\POSTSTATE{\POST{\mathcal{\State}}}
\newcommand\Prestate{\PRE{\State}}
\newcommand\Poststate{\POST{\State}}
\newcommand\prestate{\PRE{s}}
\newcommand\poststate{\POST{s}}

\usepackage[vlined,ruled,norelsize,algo2e]{algorithm2e}
\usepackage{subcaption}
\SetKwInOut{Input}{input}
\SetKwInOut{Output}{output}
\SetKwRepeat{Do}{do}{while}

\usepackage{url}

\begin{document}
\title{Renewal Monte Carlo: \\Renewal theory based reinforcement learning}
\author{Jayakumar Subramanian 
and Aditya Mahajan 
\thanks{This work was supported by the Natural Sciences and Engineering Research Council of Canada through  NSERC Discovery Accelerator RGPAS 493011-16.}
\thanks{The authors are with the Electrical and Computer  Engineering
  Department, McGill University, Montreal, QC H3A\,0E9, Canada. (e-mails:
  jayakumar.subramanian@mail.mcgill.ca, aditya.mahajan@mcgill.ca)}}

\maketitle

\begin{abstract}
In this paper, we present an online reinforcement learning algorithm, called
Renewal Monte Carlo (RMC), for infinite horizon Markov decision processes with
a designated start state. RMC is a Monte Carlo algorithm and retains the
advantages of Monte Carlo methods including low bias, simplicity, and ease
of implementation while, at the same time, circumvents their key drawbacks of
high variance and delayed (end of episode) updates. The key ideas behind RMC
are as follows. First, under any reasonable policy, the reward process is
ergodic. So, by renewal theory, the performance of a policy is equal to the ratio
of expected discounted reward to the expected discounted time over a
regenerative cycle. Second, by carefully examining the expression for
performance gradient, we propose a stochastic approximation algorithm that
only requires estimates of the expected discounted reward and discounted time
over a regenerative cycle and their gradients. We propose two unbiased
estimators for evaluating performance gradients---a likelihood ratio based
estimator and a simultaneous perturbation based estimator---and show that for
both estimators, RMC converges to a locally optimal policy. We generalize the
RMC algorithm to post-decision state models and also present a variant that
converges faster to an approximately optimal policy. We conclude by
presenting numerical experiments on
a randomly generated MDP, event-triggered communication, and inventory
management. 
\end{abstract}

\begin{IEEEkeywords}
  Reinforcement learning, Markov decision processes, renewal theory, Monte
  Carlo methods, policy gradient, stochastic approximation
\end{IEEEkeywords}

\section{Introduction}
\label{intro}

In recent years, reinforcement learning~\cite{BertsekasTsitsiklis1996,KaelblingLittmanMoore1996,sutton1998reinforcement,Szepesvari2009} has emerged as a leading framework to learn how to act optimally in unknown environments. Policy gradient methods~\cite{Sutton2000,Kakade2002,KondaTsitsiklis2003,
Schulman2015,Schulman2017,SilverSchrittwieserSimonyanEtAl2017} have played a prominent role in the success of reinforcement learning. Such methods have two critical components: policy evaluation and policy improvement. In policy evaluation step, the performance of a parameterized policy is evaluated while in the policy improvement step, the policy parameters are updated using stochastic gradient ascent.

Policy gradient methods may be broadly classified as Monte Carlo 
methods and temporal difference methods. In Monte Carlo methods, performance of a
policy is estimated using the discounted return of a single sample path; in
temporal difference methods, the value(-action) function is guessed and this guess is
iteratively improved using temporal differences. Monte Carlo methods are attractive
because they have zero bias, are simple and easy to implement, and work for
both discounted and average reward setups as well as for models with
continuous state and action spaces. However, they suffer from various drawbacks.
First, they have a high variance because a single sample path is used to
estimate performance. Second, they are not asymptotically optimal for
infinite horizon models because it is effectively
assumed that the model is episodic; in infinite horizon models, the
trajectory is arbitrarily truncated to treat the model as an episodic model.
Third, the policy improvement step cannot be carried out in tandem with
policy evaluation. One must wait until the end of the episode to estimate the
performance and only then can the policy parameters be updated. It is for
these reasons that Monte Carlo methods are largely ignored in the literature
on policy gradient methods, which
almost exclusively focuses on temporal difference methods such as actor-critic with eligibility traces~\cite{sutton1998reinforcement}. 

In this paper, we propose a Monte Carlo method---which we call \emph{Renewal
Monte Carlo} (RMC)---for infinite horizon Markov decision processes with
designated start state. Like Monte Carlo, RMC has low bias, is simple and easy
to implement, and works for models with continuous state and action spaces.
At the same time, it does not suffer from the drawbacks of typical Monte Carlo methods.
RMC is a low-variance online algorithm that works for infinite horizon
discounted and average reward setups. One doesn't have to wait until the end
of the episode to carry out the policy improvement step; it can be carried out
whenever the system visits the start state (or a neighborhood of
it).

Although renewal theory is commonly used to estimate performance of stochastic
systems in the simulation optimization community~\cite{Glynn1986,Glynn1990},
those methods assume that the probability law of the primitive random
variables and its weak derivate are known, which is not the case in
reinforcement learning. Renewal theory is also commonly used in
the engineering literature on queuing theory and systems and control for
Markov decision processes (MDPs) with average reward criteria and a known
system model. There is some prior work on using renewal theory for
reinforcement learning~\cite{MarbachTsitsiklis2001,MarbachTsitsiklis2003},
where renewal theory based estimators for the average return and differential
value function for average reward MDPs is developed. In RMC, renewal theory is
used in a different manner for discounted reward MDPs (and the results
generalize to average cost MDPs).

\section{RMC Algorithm} \label{sec:rl}

Consider a Markov decision process (MDP) with state $\State_t \in \SPSTATE$
and action $\Action_t \in \ACTION$. The system starts in an initial state
$\state_0 \in \SPSTATE$ and at time $t$:
\begin{enumerate}
  \item there is a controlled transition from $S_t$ to $S_{t+1}$ according to
    a transition kernel $P(\Action_t)$;
  \item a per-step reward $R_t =  r(\State_t, \Action_t, \State_{t+1})$ is
    received.
\end{enumerate}
Future is discounted at a rate $\discount \in (0,1)$. 

A (time-homogeneous and Markov) policy $\policy$ maps the current state to a
distribution on actions, i.e., $\Action_t \sim \policy(\State_t)$. We use
$\policy(\action | \state)$ to denote 
$\PR(\Action_t = \action | \State_t = \state)$.
The performance of a policy $\policy$ is given by
\begin{equation}
  J_\pi = 
  \EXPA\biggl[\sum_{t=0}^{\infty}\discount^{t}\Reward_t\biggm|\State_0 =
  \state_0\biggr]. \label{eq:Vp-defn}
\end{equation}

We are interested in identifying an optimal policy, i.e., a policy that maximizes the performance. When $\SPSTATE$ and $\ACTION$ are Borel spaces, we assume that the model satisfies the standard conditions under which time-homogeneous Markov policies are optimal~\cite{Hernandez-Lerma1996}.  In the
sequel, we present a sample path based online learning algorithm, which
we call Renewal Monte Carlo (RMC), which identifies a locally optimal policy
within the class of parameterized policies.

Suppose policies are parameterized by a closed and convex subset
$\polParSpace$ of the Euclidean space. For example, $\polParSpace$
  could be the weight vector in a Gibbs soft-max policy, or the weights
  of a deep neural network, or the thresholds in a control limit policy, and
so on. Given $\polPars \in \polParSpace$, we use $\policy_\polPars$ to denote
the policy parameterized by $\polPars$ and $J_\polPars$ to denote
$J_{\policy_{\polPars}}$. We assume that for all policies $\policy_\polPars$,
$\polPars \in \polParSpace$, the designated start state $s_0$ is positive
recurrent.

The typical approach for policy gradient based reinforcement learning is to start with an
initial guess $\polPars_0 \in \polParSpace$ and iteratively update it using
stochastic gradient ascent. In particular, let $\widehat \GRAD J_{\polPars_m}$ be an
unbiased estimator of $\GRAD_\polPars J_\polPars \big|_{\polPars =
\polPars_m}$, then update 
\begin{equation} \label{eq:J-update}
  \polPars_{m+1}
= \big[ \polPars_m + \alpha_m \widehat \GRAD J_{\polPars_m} \big]_{\polParSpace}
\end{equation} 
where $[\polPars]_{\polParSpace}$ denotes the projection of
$\polPars$ onto $\polParSpace$ and $\{\alpha_m\}_{m \ge 1}$ is the sequence of
learning rates that satisfies the standard assumptions of 
\begin{equation}\label{eq:lr}
  \sum_{m=1}^\infty \alpha_m = \infty 
  \quad\text{and}\quad 
  \sum_{m=1}^\infty \alpha_m^2 < \infty.
\end{equation}
Under mild
technical conditions~\cite{Borkar:book}, the above iteration converges to a $\polPars^*$ that is
locally optimal, i.e., $\GRAD_\polPars J_\polPars \big|_{\polPars =
\polPars^*} = 0$. In RMC, we approximate $\GRAD_\polPars J_\polPars$ by a
Renewal theory based estimator as explained below.

Let $\tau^{(n)}$ denote the stopping time when the system returns to the start
state $\state_0$ for the $n$-th time. In particular, let $\tau^{(0)}=0$  and
for $n \ge 1$  define 
\[ \tau^{(n)} = \inf\{t > \tau^{(n-1)}:\state_t = \state_0\}. \]
We call the sequence of $(\State_t, \Action_t,
\Reward_t)$ from $\tau^{(n-1)}$ to ${\tau^{(n)} - 1}$ as the
\emph{$n$-th regenerative cycle}. Let $\R^{(n)}$ and $\T^{(n)}$ denote the total
discounted reward and total discounted time of the $n$-th regenerative cycle,
i.e., 
\begin{align}\label{eq:Rn_and_Tn}
  \R^{(n)} = \Gamma^{(n)} 
  \smashoperator[r]{\sum_{t = \tau^{(n-1)}}^{\tau^{(n)} - 1}}
  \discount^{t} R_t
  \quad\text{and}\quad
  \T^{(n)} = \Gamma^{(n)}
  \smashoperator[r]{\sum_{t = \tau^{(n-1)}}^{\tau^{(n)}-1}}
  \discount^{t},
\end{align}
where $\Gamma^{(n)}=\discount^{-\tau^{(n-1)}}$.
By the strong Markov property, $\{\R^{(n)}\}_{n \ge 1}$ and $\{\T^{(n)}\}_{n \ge 1}$
are i.i.d.\@ sequences. Let $\R_\polPars$ and $\T_\polPars$ denote $\EXP[\R^{(n)}]$
and $\EXP[\T^{(n)}]$, respectively. Define
\begin{equation}
  \widehat \R = \frac 1N \sum_{n=1}^N \R^{(n)} 
  \quad \hbox{and}\quad
  \widehat \T = \frac 1N \sum_{n=1}^N \T^{(n)} ,
  \label{eq:est}
\end{equation}
where $N$ is a large number. 
Then, $\widehat \R$ and $\widehat \T$ are unbiased and asymptotically
consistent estimators of $\R_\polPars$ and $\T_\polPars$.

From ideas similar to standard
Renewal theory \cite{Feller1966}, we have the following.
\begin{proposition}[Renewal Relationship]\label{prop:renewal-basic1} The performance of policy $\policy_\polPars$ is given by:
\begin{equation}\label{eq:renewal-basic1}
  J_\polPars = \frac { \R_\polPars } { (1 - \discount) \T_\polPars }.
\end{equation}
\end{proposition}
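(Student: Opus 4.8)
The plan is to expand the infinite discounted reward as a sum over the regenerative cycles and then exploit the i.i.d.\ structure guaranteed by the strong Markov property. First I would write the total discounted reward cycle-by-cycle. Since $\discount^t = \discount^{\tau^{(n-1)}}\discount^{t-\tau^{(n-1)}}$ for $t$ in the $n$-th cycle, the definition of $\R^{(n)}$ in~\eqref{eq:Rn_and_Tn} gives
\begin{equation*}
  \sum_{t=0}^{\infty}\discount^t \Reward_t
  = \sum_{n=1}^{\infty} \discount^{\tau^{(n-1)}} \R^{(n)}.
\end{equation*}
Taking expectations (justified by Tonelli's theorem under the standard integrability conditions, the discounting making the series absolutely summable) yields $J_\polPars = \sum_{n\ge1}\EXP[\discount^{\tau^{(n-1)}}\R^{(n)}]$.

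The key step is to factor each term. Writing $\tau^{(n-1)} = \sum_{k=1}^{n-1}(\tau^{(k)}-\tau^{(k-1)})$, the quantity $\discount^{\tau^{(n-1)}}$ is a function only of the first $n-1$ cycles, whereas $\R^{(n)}$ is a function only of the $n$-th cycle. Because every cycle begins at $\state_0$, the strong Markov property makes these two independent and renders the cycle lengths i.i.d. Hence, setting $\beta := \EXP[\discount^{\tau^{(n)}-\tau^{(n-1)}}]$,
\begin{equation*}
  \EXP\bigl[\discount^{\tau^{(n-1)}}\R^{(n)}\bigr]
  = \EXP\bigl[\discount^{\tau^{(n-1)}}\bigr]\,\R_\polPars
  = \beta^{\,n-1}\,\R_\polPars .
\end{equation*}
Since each cycle has length at least one, $\beta \le \discount < 1$, so summing the geometric series gives $J_\polPars = \R_\polPars/(1-\beta)$.

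It then remains to identify $1-\beta$ with $(1-\discount)\T_\polPars$. Here I would use the closed form of the per-cycle discounted time: evaluating the geometric sum in~\eqref{eq:Rn_and_Tn} gives $\T^{(n)} = (1-\discount^{\tau^{(n)}-\tau^{(n-1)}})/(1-\discount)$, so that
\begin{equation*}
  \T_\polPars = \EXP\bigl[\T^{(n)}\bigr] = \frac{1-\beta}{1-\discount},
  \qquad\text{i.e.,}\qquad 1-\beta = (1-\discount)\,\T_\polPars .
\end{equation*}
Substituting this into $J_\polPars = \R_\polPars/(1-\beta)$ yields the claimed identity~\eqref{eq:renewal-basic1}.

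I expect the main obstacle to be technical rather than conceptual: justifying the interchange of expectation and infinite summation, and invoking the strong Markov property carefully enough to assert simultaneously the independence of $\discount^{\tau^{(n-1)}}$ from $\R^{(n)}$ and the i.i.d.\ nature of the cycle lengths. The algebraic core---the geometric series in $\beta$ together with the closed form for $\T^{(n)}$---is short; the care lies in the measure-theoretic bookkeeping and in confirming $\beta<1$, which is exactly what guarantees both convergence of the series and positive recurrence of $\state_0$.
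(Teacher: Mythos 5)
Your proof is correct, but it takes a genuinely different route from the paper's. The paper splits the return at the \emph{first} regeneration time only and invokes the strong Markov property once to obtain the fixed-point (renewal) equation $J_\polPars = \R_\polPars + \overline\T_\polPars J_\polPars$, where $\overline\T_\polPars = \EXPB[\discount^{\tau^{(1)}}]$; it then substitutes $\overline\T_\polPars = 1 - (1-\discount)\T_\polPars$ and solves for $J_\polPars$. You instead unroll the entire trajectory into all regenerative cycles, factor $\EXP\bigl[\discount^{\tau^{(n-1)}}\R^{(n)}\bigr] = \beta^{n-1}\R_\polPars$ using the mutual independence of cycles, and sum the geometric series; your identification $1-\beta = (1-\discount)\T_\polPars$ is exactly the paper's equation relating $\overline\T_\polPars$ and $\T_\polPars$ (your $\beta$ \emph{is} their $\overline\T_\polPars$). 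In effect, your argument is the paper's recursion unrolled to infinity. What each buys: the paper's one-step recursion is shorter and needs the strong Markov property at a single stopping time, but it implicitly presupposes that $J_\polPars$ is finite in order to rearrange the fixed-point equation; your version demands more bookkeeping---the full i.i.d.\ structure across all cycles and a Tonelli/Fubini interchange over infinitely many terms, which you correctly flag---but in exchange it exhibits $J_\polPars$ as an explicitly convergent series, with finiteness falling out of $\beta \le \discount < 1$ and $\R_\polPars < \infty$ rather than being assumed.
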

\begin{proof}
  For ease of notation, define
  \[
    \overline \T_\polPars = \EXPB\big[ \discount^{\tau^{(n)} - \tau^{(n-1)}}
    \big]
  \]
  Using the formula for geometric series, we get that $\T_\polPars = ( 1 -
  \overline \T_\polPars)/(1 - \discount)$. Hence,
  \begin{equation} \label{eq:Tbar1}
    \overline \T_\polPars = 1 - (1 - \discount) \T_\polPars.
  \end{equation}

  Now, consider the performance:
  \begin{align}
    J_\polPars &= \EXPB\bigg[
      \sum_{t=0}^{\tau^{(1)}-1} \discount^{t} R_t 
      \notag \\
      & \hspace{7em}
    +  \discount^{\tau^{(1)}}
    \smashoperator{\sum_{t = \tau^{(1)}}^\infty} \discount^{t-\tau^{(1)}} R_t 
    \biggm| \State_{0} = \state_0 \bigg]
    \displaybreak[0]
    \notag \\
    &\stackrel{(a)}= \R_\polPars + 
    \EXPB[ \discount^{\tau^{(1)})} ]\, J_\polPars
    \displaybreak[0]
    \notag \\
    &= \R_\polPars + \overline \T_\polPars J_\polPars,
    \label{eq:R-11}
  \end{align}
  where the second expression in $(a)$ uses the independence of random
  variables from $(0, \tau^{(1)}-1)$ to those from $\tau^{(1)}$ onwards 
  due to the strong Markov property. Substituting~\eqref{eq:Tbar1}
  in~\eqref{eq:R-11} and rearranging terms, we get the result of the
  proposition.
\end{proof}
Differentiating both sides of Equation~\eqref{eq:renewal-basic1} with respect to $\polPars$, we get that
\begin{equation} \label{eq:H}
  \GRAD_\polPars J_\polPars = \frac{H_\polPars}{\T_\polPars^2(1 - \discount)},
  \enskip\text{where }
  H_\polPars =  \T_\polPars \GRAD_\polPars \R_\polPars
  - \R_\polPars \GRAD_\polPars \T_\polPars. 
\end{equation}

Therefore, instead of using  stochastic gradient ascent to find the maximum
of $J_\polPars$, we can use stochastic approximation to find
the root of $H_\polPars$. In particular, let $\widehat H_m$ be an unbiased
estimator of $H_{\polPars_m}$. We then use the update
\begin{equation} \label{eq:H-update}
  \polPars_{m+1} = \big[ \polPars_m + \alpha_m \widehat H_m \big]_{\polParSpace}
\end{equation}
where $\{\alpha_m\}_{m \ge 1}$ satisfies the standard conditions on learning
rates~\eqref{eq:lr}.  The above iteration converges to a locally optimal policy.
Specifically, we have the following.

\begin{theorem}\label{thm:convergence}
  Let $\widehat \R_m$, $\widehat \T_m$, $\widehat \GRAD \R_m$ and $\widehat
  \GRAD \T_m$ be unbiased estimators of $\R_{\polPars_m}$, $\T_{\polPars_m}$,
  $\GRAD_\polPars \R_{\polPars_m}$, and $\GRAD_\polPars \R_{\polPars_m}$,
  respectively such that $\widehat \T_m \perp \widehat \GRAD \R_m$ and
  $\widehat \R_m \perp \widehat \GRAD \T_m$.\footnote{The notation $X \perp Y$
  means that the random variables $X$ and $Y$ are independent.} Then,
  \begin{equation}
    \widehat H_m = \widehat \T_m \widehat \GRAD \R_m - \widehat \R_m 
    \widehat \GRAD \T_m
    \label{eq:H-est}
  \end{equation}
  is an unbiased estimator of $H_\polPars$ and 
  the sequence $\{\polPars_m\}_{m \ge 1}$ generated
  by~\eqref{eq:H-update} converges almost surely and 
  \[
    \lim_{m \to \infty} \GRAD_\polPars J_\polPars \big|_{\polPars_m} = 0.
  \]
\end{theorem}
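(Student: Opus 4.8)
The plan is to prove the two assertions in turn: the unbiasedness of $\widehat H_m$ follows directly from the independence hypotheses, while the almost-sure convergence follows from the ODE method for projected stochastic approximation, once the recursion is recognized as gradient ascent on $J_\polPars$ in disguise.

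For unbiasedness, let $\mathcal F_m$ be the $\sigma$-algebra generated by the history up to the $m$-th update, so that $\polPars_m$ is $\mathcal F_m$-measurable. Conditioning on $\mathcal F_m$ and using linearity,
\begin{equation*}
  \EXP[\widehat H_m \mid \mathcal F_m]
  = \EXP[\widehat \T_m \widehat \GRAD \R_m \mid \mathcal F_m]
  - \EXP[\widehat \R_m \widehat \GRAD \T_m \mid \mathcal F_m].
\end{equation*}
The hypotheses $\widehat \T_m \perp \widehat \GRAD \R_m$ and $\widehat \R_m \perp \widehat \GRAD \T_m$ let me factor each conditional expectation into a product of expectations, and since each of the four factors is unbiased for the corresponding quantity at $\polPars_m$, I would obtain $\EXP[\widehat H_m \mid \mathcal F_m] = \T_{\polPars_m}\GRAD_\polPars\R_{\polPars_m} - \R_{\polPars_m}\GRAD_\polPars\T_{\polPars_m} = H_{\polPars_m}$. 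The independence is essential: without it the expectation of each product would carry a covariance term and $\widehat H_m$ would be biased.

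For convergence, I would recast \eqref{eq:H-update} in standard stochastic-approximation form by writing $M_{m+1} = \widehat H_m - H_{\polPars_m}$, which by the unbiasedness just shown is a martingale-difference sequence adapted to $\{\mathcal F_m\}$; the recursion is then a projected stochastic approximation with mean field $H_\polPars$ and noise $M_{m+1}$. To invoke the ODE method of \cite{Borkar:book}, I would check the usual hypotheses: (i) $\polPars \mapsto H_\polPars$ is Lipschitz, which follows from smoothness of $\R_\polPars$, $\T_\polPars$ and their gradients; (ii) $\EXP[\|M_{m+1}\|^2 \mid \mathcal F_m]$ grows at most quadratically in $\|\polPars_m\|$, which follows from square-integrability of the cycle estimators; (iii) the step sizes satisfy \eqref{eq:lr}; and (iv) the iterates remain bounded, which the projection onto $\polParSpace$ guarantees. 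The conceptual heart of the argument is the identity \eqref{eq:H}, namely $H_\polPars = \T_\polPars^2(1-\discount)\,\GRAD_\polPars J_\polPars$: since $\T_\polPars^2(1-\discount) > 0$, the mean field is a strictly positive rescaling of $\GRAD_\polPars J_\polPars$, so the associated projected ODE
\begin{equation*}
  \dot\polPars = H_\polPars + z, \qquad z = -\Pi_{\mathcal N_{\polParSpace}(\polPars)}(H_\polPars),
\end{equation*}
with $\mathcal N_{\polParSpace}(\polPars)$ the normal cone of $\polParSpace$ at $\polPars$, is a gradient-like flow. Taking $J_\polPars$ as a Lyapunov function, a short computation gives $\tfrac{d}{dt} J_{\polPars(t)} = \T_\polPars^2(1-\discount)\,\|\Pi_{\mathcal T_{\polParSpace}(\polPars)}\GRAD_\polPars J_\polPars\|^2 \ge 0$, where $\mathcal T_{\polParSpace}(\polPars)$ is the tangent cone, the boundary force contributing nonpositively because $\langle \GRAD_\polPars J_\polPars, z\rangle \le 0$. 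Hence $J$ is nondecreasing along the flow and stationary exactly at the points where the projected gradient vanishes (equivalently $\GRAD_\polPars J_\polPars = 0$ in the interior); by LaSalle's invariance principle the ODE converges to this stationary set, and the ODE method transfers this to the iterates, giving $\polPars_m$ convergent almost surely with $\GRAD_\polPars J_\polPars|_{\polPars_m} \to 0$.

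The main obstacle I anticipate is the technical verification of conditions (i)--(iv), in particular the Lipschitz continuity of $\polPars \mapsto H_\polPars$ and the quadratic bound on the martingale noise, together with the careful handling of the projection so that the limit set is identified with the vanishing of the (projected) gradient of $J_\polPars$ rather than merely of the mean field $H_\polPars$. The gradient-ascent interpretation supplied by \eqref{eq:H} is precisely what makes the Lyapunov computation clean; with that in hand, the remainder is a routine application of standard projected stochastic-approximation theory.
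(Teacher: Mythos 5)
Your proof is correct and takes essentially the same approach as the paper: unbiasedness by factoring the expectations under the two independence hypotheses, and almost-sure convergence via the projected-ODE method of \cite[Theorem 2.2]{Borkar:book}, whose conditions (A1)--(A4) are exactly the Lipschitz mean field, step-size, martingale-noise, and boundedness conditions you verify. The gradient-flow/Lyapunov argument you spell out, built on the identity $H_\polPars = \T_\polPars^2(1-\discount)\,\GRAD_\polPars J_\polPars$, is precisely the content that the paper's citation encapsulates.
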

\begin{proof}
  The unbiasedness of $\widehat H_m$ follows immediately from the
  independence assumption. The convergence of
  the $\{\polPars_m\}_{m \ge 1}$ follows from~\cite[Theorem 2.2]{Borkar:book}
  and the fact that the model satisfies conditions (A1)--(A4)
  of~\cite[pg~10--11]{Borkar:book}.
\end{proof}

In the remainder of this section, we present two methods for estimating the
gradients of $\R_\polPars$ and $\T_\polPars$. 
The first is a likelihood ratio based
gradient estimator which works when the policy is differentiable with respect
to the policy parameters. The second is a simultaneous perturbation based
gradient estimator that uses finite differences, which is useful when the
policy is not differentiable with respect to the policy parameters.

\subsection{Likelihood ratio based gradient based estimator}\label{sec:likelihood}

One approach to estimate the performance gradient is to use likelihood radio
based estimates~\cite{Rubinstein1989,Glynn1990,Williams1992}.
Suppose the policy $\policy_\polPars(\action | \state)$ is differentiable with respect to
$\polPars$. For any time~$t$, define the likelihood function
\begin{equation}\label{eq:score}
  \Score_t = 
  \GRAD_\polPars \log[ \policy_\polPars(\Action_t \mid \State_t) ],
\end{equation}
and for $\sigma \in \{ \tau^{(n-1)}, \dots, \tau^{(n)} - 1 \}$, define
\begin{equation}
  \R^{(n)}_\sigma =  \Gamma^{(n)}\sum_{t=\sigma}^{\tau^{(n)}-1}\discount^tR_t,\enskip
  \T^{(n)}_\sigma = \Gamma^{(n)}\sum_{t=\sigma}^{\tau^{(n)}-1}\discount^t.\label{eq:R_T_sigma}
\end{equation}
In this notation $\R^{(n)} = \R^{(n)}_{\tau^{(n-1)}}$ and $\T^{(n)} = \T^{(n)}_{\tau^{(n-1)}}$.
Then, define the following estimators for $\GRAD_\polPars \R_\polPars$ and
$\GRAD_\polPars \T_\polPars$:
\begin{align}
\widehat \GRAD \R &= \frac 1N \sum_{n=1}^N \sum_{\sigma=\tau^{(n-1)}}^{\tau^{(n)}-1}\R^{(n)}_\sigma  \Score_{\sigma},\label{eq:grad_R_new}\\
\widehat \GRAD \T &= \frac 1N \sum_{n=1}^N \sum_{\sigma=\tau^{(n-1)}}^{\tau^{(n)}-1}\T^{(n)}_\sigma  \Score_{\sigma},\label{eq:grad_T_new}
\end{align}
where $N$ is a large number. 

\begin{proposition} \label{prop:estimator}
  $\widehat \GRAD \R$ and $\widehat \GRAD \T$ defined above are unbiased
  and asymptotically consistent estimators
  of $\GRAD_\polPars \R_\polPars$ and $\GRAD_\polPars \T_\polPars$. 
\end{proposition}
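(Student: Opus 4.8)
The plan is to prove unbiasedness for a single regenerative cycle and then transfer it to the $N$-cycle average. Because $\{\R^{(n)}\}_{n\ge1}$ and $\{\T^{(n)}\}_{n\ge1}$ are i.i.d.\ by the strong Markov property---and the normalization $\Gamma^{(n)}$ makes each cycle statistically identical to a fresh cycle initialized at $\state_0$ (for the first cycle $\Gamma^{(1)}=1$)---it is enough to show that the single-cycle summand $\sum_{\sigma=0}^{\tau^{(1)}-1}\R^{(1)}_\sigma\Score_\sigma$ has expectation equal to $\GRAD_\polPars\R_\polPars$, and likewise $\sum_{\sigma=0}^{\tau^{(1)}-1}\T^{(1)}_\sigma\Score_\sigma$ for $\GRAD_\polPars\T_\polPars$. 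Asymptotic consistency then follows from the strong law of large numbers applied to these i.i.d.\ per-cycle summands, once their first and second moments are seen to be finite.

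For the single-cycle expectation I would use the likelihood-ratio (score-function) identity. A cycle path $\omega=(\state_0,\action_0,\state_1,\dots,\state_{\ell-1},\action_{\ell-1})$ of length $\ell$ (first return to $\state_0$ at time $\ell$) has law $P_\polPars(\omega)=\prod_{t=0}^{\ell-1}\policy_\polPars(\action_t\mid\state_t)\,P(\state_{t+1}\mid\state_t,\action_t)$; since only the policy factors depend on $\polPars$, $\GRAD_\polPars\log P_\polPars(\omega)=\sum_{t=0}^{\ell-1}\Score_t$. Differentiating $\R_\polPars=\EXPB[\sum_{t=0}^{\tau^{(1)}-1}\discount^tR_t]$ under the expectation and using $\GRAD_\polPars P_\polPars=P_\polPars\,\GRAD_\polPars\log P_\polPars$ gives
\[
 \GRAD_\polPars\R_\polPars
 =\EXPB\biggl[\biggl(\sum_{t=0}^{\tau^{(1)}-1}\discount^tR_t\biggr)\biggl(\sum_{\sigma=0}^{\tau^{(1)}-1}\Score_\sigma\biggr)\biggr].
\]
The same computation with $R_t$ replaced by $1$ yields the corresponding identity for $\GRAD_\polPars\T_\polPars$.

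It then remains to reduce this full product to the \emph{rewards-to-go} form of~\eqref{eq:grad_R_new}. Expanding the product as $\sum_\sigma\Score_\sigma\sum_t\discount^tR_t$ and splitting each inner sum at $t=\sigma$, the terms with $t\ge\sigma$ reassemble exactly into $\sum_\sigma\R^{(1)}_\sigma\Score_\sigma$, so I must show the cross terms with $t<\sigma$ vanish in expectation. Writing $\sum_{\sigma=0}^{\tau^{(1)}-1}(\cdot)=\sum_{\sigma\ge0}\IND\{\sigma<\tau^{(1)}\}(\cdot)$, the indicator event $\{\sigma<\tau^{(1)}\}=\{\State_1\ne\state_0,\dots,\State_\sigma\ne\state_0\}$ and each $\discount^tR_t$ with $t<\sigma$ are measurable with respect to the history $\mathcal F_\sigma$ generated by $\State_0,\Action_0,\dots,\State_\sigma$, whereas $\EXP[\Score_\sigma\mid\mathcal F_\sigma]=\EXP[\Score_\sigma\mid\State_\sigma]=\GRAD_\polPars\!\int\policy_\polPars(\action\mid\State_\sigma)\,d\action=\GRAD_\polPars1=0$. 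Conditioning on $\mathcal F_\sigma$ therefore annihilates every $t<\sigma$ term, leaving precisely the estimator~\eqref{eq:grad_R_new}; the identical argument gives~\eqref{eq:grad_T_new}.

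I expect the principal obstacle to be the analytic justification of interchanging $\GRAD_\polPars$ with the expectation over a \emph{random-length} trajectory in the second step: since the number of score terms is itself random, differentiating under the (countable, or in the Borel case integral) sum over cycle paths requires a $\polPars$-locally integrable dominating bound. I would secure this from positive recurrence of $\state_0$ together with $\discount<1$: the former gives the cycle length $\tau^{(1)}$ light tails (locally uniformly in $\polPars$), and the latter keeps the discounted per-cycle reward and time uniformly bounded, which simultaneously supplies the dominating function for dominated convergence and the finite second moments needed for the strong law in the first step. The only other delicate point is the adaptedness check in the causality step---that $\{\sigma<\tau^{(1)}\}$ is $\mathcal F_\sigma$-measurable so that the conditioning is legitimate---which is routine once the filtration is fixed.
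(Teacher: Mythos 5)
Your proof is correct, but it reaches the rewards-to-go form~\eqref{eq:grad_R_new} by a genuinely different route than the paper. The paper never forms the full-cycle product $\bigl(\sum_t \discount^t R_t\bigr)\bigl(\sum_\sigma \Score_\sigma\bigr)$: it first decomposes $\R_\polPars$ into per-step terms $\discount^t\,\EXPB[R_t]$ and applies the log-derivative trick to each one using the likelihood of the \emph{truncated} path $D^{(n)}_t$, whose law involves only the policy factors up to time $t$; consequently only the causal scores $\Score_\sigma$ with $\sigma \le t$ ever appear, and a single change in the order of summation yields the estimator directly. You instead differentiate the whole cycle return at once, obtaining the vanilla REINFORCE identity over a cycle, and then kill the acausal cross terms ($t < \sigma$) by conditioning on $\mathcal F_\sigma$, using that $\{\sigma < \tau^{(1)}\}$ and $R_t$ for $t<\sigma$ are $\mathcal F_\sigma$-measurable while $\EXP[\Score_\sigma \mid \mathcal F_\sigma] = 0$. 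Both are standard derivations of the same fact; yours requires the extra martingale-type step but isolates a reusable principle---any $\mathcal F_\sigma$-measurable multiplier of $\Score_\sigma$ has zero expectation---which is precisely what justifies the baseline variance reduction mentioned in Remark~\ref{rem:1}, and it establishes in passing that the full-product estimator is unbiased as well; the paper's route is shorter because the cross terms never arise. You are also more explicit than the paper about the analytic caveats (differentiating under an expectation over a random-length trajectory, and the stopping-time measurability), which the paper glosses over entirely; note only that positive recurrence by itself gives $\EXPB[\tau^{(1)}]<\infty$ rather than light (geometric) tails, and that the strong law of large numbers for the i.i.d.\ per-cycle summands needs only finite first moments, so your appeal to second moments is stronger than necessary.
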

\begin{proof}
Let $P_\polPars$ denote the probability induced on the sample paths when the
system is following policy $\policy_\polPars$. For $t \in \{ \tau^{(n-1)},
\dots, \tau^{(n)} - 1\}$, let $D^{(n)}_{t}$ denote the sample path $(\State_s,
\Action_s, \State_{s+1})_{s=\tau^{(n-1)}}^{t}$ for
the $n$-th regenerative cycle until time $t$. Then,
\[
  \let\smashoperator\relax
  P_\polPars(D^{(n)}_t) = \smashoperator{\prod_{s= \tau^{(n-1)}}^{t} }
  \policy_\polPars(A_s | S_s)
  \PR(\State_{s+1} | S_{s}, A_{s})
\]
Therefore,
\begin{equation}  \label{rel:11}
  \GRAD_\polPars \log P_\polPars(D^{(n)}_t) =
  \smashoperator{\sum_{s=\tau^{(n-1)}}^{t}} \GRAD_\polPars \log
  \policy_\polPars(\Action_s | \State_s) = \smashoperator{\sum_{s=\tau^{(n-1)}}^{t}} \Score_s.
\end{equation}

Note that $\R_\polPars$ can be written as:
\[
  \R_\polPars = \Gamma^{(n)}\smashoperator[r]{\sum_{t = \tau^{(n-1)}}^{\tau^{(n)} - 1}}\discount^{t}\EXPB[ R_t].
\]
Using the log derivative trick,\footnote{Log-derivative trick: For any distribution $p(x|\theta)$ and any function $f$,
\[
  \GRAD_\theta \EXP_{X \sim p(X|\theta)} [ f(X) ]
  = 
  \EXP_{X \sim p(X|\theta)}[ f(X) \GRAD_\theta \log p(X | \theta)].
\]
} we get
\begin{align} 
  \GRAD_\polPars \R_\polPars &= 
  \Gamma^{(n)} 
    \smashoperator[r]{\sum_{t = \tau^{(n-1)}}^{\tau^{(n)} - 1}}
    \discount^{t}\,
  \EXPB[  R_t \GRAD_\polPars \log P_\polPars(D^{(n)}_t) ] 
  \notag \\
  &\stackrel{(a)}= 
  \Gamma^{(n)} 
  \EXPB\bigg[
  \sum_{t = \tau^{(n-1)}}^{\tau^{(n)} - 1}
  \bigg[
\discount^{t} R_t{\sum_{\sigma=\tau^{(n-1)}}^t}\Score_\sigma\bigg] \bigg]
  \notag \\
    &\stackrel{(b)}= \EXPB\bigg[ 
  \sum_{\sigma = \tau^{(n-1)}}^{\tau^{(n)} - 1}\Score_\sigma\bigg[
  \Gamma^{(n)}\sum_{t=\sigma}^{\tau^{(n)} - 1}\discount^{t}  R_t \bigg] 
  \bigg] \notag \\
      &\stackrel{(c)}= \EXPB\bigg[ 
  \sum_{\sigma = \tau^{(n-1)}}^{\tau^{(n)} - 1}
  \R^{(n)}_\sigma \Score_\sigma \bigg]
  \label{rel:12}
\end{align}
where $(a)$ follows from~\eqref{rel:11}, $(b)$ follows from changing the order
of summations, and $(c)$ follows from the definition of
$\R^{(n)}_\sigma$ in~\eqref{eq:R_T_sigma}. $\widehat \GRAD
\R$ is an unbiased and asymptotically consistent estimator of the right hand
side of the first equation in~\eqref{rel:12}. The result for $\widehat \GRAD
\T$ follows from a similar argument.
\end{proof}

\begin{algorithm2e}[!tb]
\def\1#1{\quitvmode\hbox to 1em{\hfill$\mathsurround0pt #1$}}
\SetKwInOut{Input}{input}
\SetKwInOut{Output}{output}
\SetKwInOut{Init}{initialize}
\SetKwProg{Fn}{function}{}{}
\SetKwFor{ForAll}{forall}{do}{}
\SetKwRepeat{Do}{do}{while}
\DontPrintSemicolon
\Input{Intial policy $\polPars_0$, discount factor $\discount$, 
  initial state~$\state_0$, number of regenerative cycles $N$}

\For{iteration $m = 0, 1, \dots$}{
  \For{regenerative cycle $n_1=1$ to $N$}{
  Generate $n_1$-th regenerative cycle using
  \rlap{policy~$\policy_{\polPars_m}$.}

  Compute $\R^{(n_1)}$ and $\T^{(n_1)}$ using~\eqref{eq:Rn_and_Tn}.
  }
  Set $\widehat \R_{m} = \texttt{average}(\R^{(n_1)}: n_1 \in \{1,\dots,N\})$.

  Set $\widehat \T_{m} = \texttt{average}(\T^{(n_1)}: n_1 \in \{1,\dots,N\})$.

  \For{regenerative cycle $n_2=1$ to $N$}{
    Generate $n_2$-th regenerative cycle using
    \rlap{policy~$\policy_{\polPars_m}$.}

    Compute $\R_\sigma^{(n_2)}$, $\T_\sigma^{(n_2)}$ and $\Score_\sigma$
    for all $\sigma$.
  }
   Compute $\widehat \GRAD \R_{m}$ and $\widehat \GRAD \T_m$ 
   using \eqref{eq:grad_R_new} and~\eqref{eq:grad_T_new}.

  \vskip 2pt
   Set $\widehat H_m = \widehat \T_m \widehat \GRAD \R_m - \widehat \R_m \widehat \GRAD \T_m$.

  \vskip 4pt
  Update $\polPars_{m+1} = \big[ \polPars_m + \alpha_m \widehat H_m
  \big]_{\polParSpace}$.
}
\caption{RMC Algorithm with likelihood ratio based gradient estimates.}
\label{alg:likelihood}
\end{algorithm2e}

To satisfy the independence condition of Theorem~\ref{thm:convergence}, we use two independent sample paths: one to estimate $\widehat \R$ and
$\widehat \T$ and the other to estimate $\widehat \GRAD \R$ and $\widehat
\GRAD \T$. The complete algorithm in shown in Algorithm~\ref{alg:likelihood}.
An immediate consequence of Theorem~\ref{thm:convergence} is the following. 
\begin{corollary}\label{cor:pol_grad_conv}
  The sequence $\{\polPars_m\}_{m \ge 1}$ generated by
  Algorithm~\ref{alg:likelihood} converges to a local optimal.
\end{corollary}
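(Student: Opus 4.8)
The plan is to obtain this corollary as a direct instantiation of Theorem~\ref{thm:convergence}: it suffices to verify that the four estimators produced in each iteration~$m$ of Algorithm~\ref{alg:likelihood} are unbiased and satisfy the two independence requirements $\widehat\T_m \perp \widehat\GRAD\R_m$ and $\widehat\R_m \perp \widehat\GRAD\T_m$. Once these hypotheses are checked, Theorem~\ref{thm:convergence} delivers $\lim_{m\to\infty}\GRAD_\polPars J_\polPars\big|_{\polPars_m}=0$, which is exactly the assertion that $\{\polPars_m\}_{m\ge1}$ converges to a local optimum.

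First I would establish unbiasedness. In iteration~$m$ the algorithm fixes the policy $\policy_{\polPars_m}$ and forms $\widehat\R_m$ and $\widehat\T_m$ as empirical averages of the $N$ i.i.d.\ samples $\R^{(n_1)}$ and $\T^{(n_1)}$; since each sample has mean $\R_{\polPars_m}$ (resp.\ $\T_{\polPars_m}$), these averages are unbiased for any finite~$N$, as already noted after~\eqref{eq:est}. The estimators $\widehat\GRAD\R_m$ and $\widehat\GRAD\T_m$ are precisely those of~\eqref{eq:grad_R_new}--\eqref{eq:grad_T_new}, whose unbiasedness for $\GRAD_\polPars\R_{\polPars_m}$ and $\GRAD_\polPars\T_{\polPars_m}$ is the content of Proposition~\ref{prop:estimator}. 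Hence all four estimators meet the unbiasedness hypothesis of Theorem~\ref{thm:convergence}.

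The key step is the independence requirement, and this is exactly what the two-loop structure of Algorithm~\ref{alg:likelihood} is engineered to guarantee. The pair $(\widehat\R_m,\widehat\T_m)$ is computed from the first batch of regenerative cycles (indexed by $n_1$), whereas the pair $(\widehat\GRAD\R_m,\widehat\GRAD\T_m)$ is computed from a second, freshly generated batch (indexed by $n_2$). Because the two batches are simulated from independent sample paths under the same policy $\policy_{\polPars_m}$, the whole vector $(\widehat\R_m,\widehat\T_m)$ is independent of the whole vector $(\widehat\GRAD\R_m,\widehat\GRAD\T_m)$; in particular the two required pairwise independences hold. I expect this to be the only substantive point: were a single sample path reused for both the level estimates and the gradient estimates, these independences would fail and the product estimator $\widehat H_m$ of~\eqref{eq:H-est} would acquire a bias, breaking the applicability of the theorem.

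With unbiasedness and independence in hand, Theorem~\ref{thm:convergence} applies verbatim to the sequence generated by the update $\polPars_{m+1}=\big[\polPars_m+\alpha_m\widehat H_m\big]_{\polParSpace}$, so the iterates converge almost surely with $\lim_{m\to\infty}\GRAD_\polPars J_\polPars\big|_{\polPars_m}=0$, which completes the proof.
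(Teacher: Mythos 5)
Your proposal is correct and follows essentially the same route as the paper: the paper presents this corollary as an immediate consequence of Theorem~\ref{thm:convergence}, with unbiasedness supplied by~\eqref{eq:est} and Proposition~\ref{prop:estimator}, and independence guaranteed precisely by the two separate batches of regenerative cycles in Algorithm~\ref{alg:likelihood}. Your observation that reusing a single sample path would break the independence hypotheses matches the paper's own motivation for the two-run structure (cf.\ Remark~\ref{rem:single_run}).
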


\begin{remark}\label{rem:1}
Algorithm~\ref{alg:likelihood} is presented in its simplest form. It is possible to use standard variance reduction techniques such as
subtracting a baseline~\cite{Williams1992,Greensmith2004,Peters2006} to reduce variance. 
\end{remark}

\begin{remark}\label{rem:2} \label{rem:single_run}
In Algorithm~\ref{alg:likelihood}, we use two separate runs to compute $(\widehat \R_m, \widehat \T_m)$ and $(\GRAD \widehat \R_m, \GRAD \widehat \T_m)$ to ensure that the independence conditions of Proposition~\ref{prop:estimator} are satisfied. In practice, we found that using a single run to compute both $(\widehat \R_m, \widehat \T_m)$ and $(\GRAD \widehat \R_m, \GRAD \widehat \T_m)$ has negligible effect on the accuracy of convergence (but speeds up convergence by a factor of two).
\end{remark}

\begin{remark}\label{rem:3}
It has been reported in the literature~\cite{Thomas2014} that using a biased estimate of the gradient given by:
\begin{equation}
  \R^{(n)}_\sigma = \Gamma^{(n)} 
  \sum_{t=\sigma}^{\tau^{(n)}-1}\discount^{t-\sigma} R_t,
  \label{eq:R_sigma_biased} 
\end{equation}
(and a similar expression for $T^{(n)}_\sigma$) leads to faster convergence. We call this variant \textit{RMC with biased gradients} and, in our experiments, found that it does converge faster than RMC.
\end{remark}

\subsection{Simultaneous perturbation based gradient estimator}
Another approach to estimate performance gradient is to use simultaneous
perturbation based estimates\cite{Spall1992,Maryak2008,Katkovnik1972,Bhatnagar:2013}. The general one-sided form of such estimates is
\[
  \widehat \GRAD \R_\polPars = \delta (
  \widehat \R_{\polPars + c \delta} - \widehat \R_{\polPars} )/c
\]
where $\delta$ is a random variable with the same dimension as
$\polPars$ and $c$ is a small constant. The expression for $\widehat \GRAD
\T_\polPars$ is similar. When $\delta_i \sim 
\text{Rademacher}(\pm 1)$, the above method corresponds
to simultaneous perturbation stochastic approximation (SPSA)~\cite{Spall1992,
Maryak2008}; when $\delta \sim \text{Normal}(0, I)$, the above method
corresponds to smoothed function stochastic approximation
(SFSA)~\cite{Katkovnik1972,Bhatnagar:2013}.

\begin{algorithm2e}[!tb]
\def\1#1{\quitvmode\hbox to 1em{\hfill$\mathsurround0pt #1$}}
\SetKwInOut{Input}{input}
\SetKwInOut{Output}{output}
\SetKwInOut{Init}{initialize}
\SetKwProg{Fn}{function}{}{}
\SetKwFor{ForAll}{forall}{do}{}
\SetKwRepeat{Do}{do}{while}
\DontPrintSemicolon
\Input{Intial policy $\polPars_0$, discount factor $\discount$, initial state~$\state_0$, number of regenerative cycles $N$, constant $c$,  perturbation distribution
$\Delta$}

\For{iteration $m = 0, 1, \dots$}{
  \For{regenerative cycle $n_1=1$ to $N$}{
    Generate $n_1$-th regenerative cycle using
    \rlap{policy~$\policy_{\polPars_m}$.}

    Compute $\R^{(n_1)}$ and $\T^{(n_1)}$ using~\eqref{eq:Rn_and_Tn}.
  }
  Set $\widehat \R_{m} = \texttt{average}(\R^{(n_1)}: n_1 \in \{1,\dots,N\})$.

  Set $\widehat \T_{m} = \texttt{average}(\T^{(n_1)}: n_1 \in \{1,\dots,N\})$.

  Sample $\delta \sim \Delta$.

  Set $\polPars_m' = \polPars_m + c \delta$.

  \For{regenerative cycle $n_2=1$ to $N$}{
    Generate $n_2$-th regenerative cycle using
    \rlap{policy~$\policy_{\polPars_m}$.}

  Compute $\R^{(n_2)}$ and $\T^{(n_2)}$ using~\eqref{eq:Rn_and_Tn}.
  }
  Set $\widehat \R'_{m} = \texttt{average}(\R^{(n_2)}: n_2 \in \{1,\dots,N\})$.

  Set $\widehat \T'_{m} = \texttt{average}(\T^{(n_2)}: n_2 \in \{1,\dots,N\})$.

  \vskip 2pt
  Set $\widehat H_m = \delta(\widehat \T_m \widehat \R'_m 
  - \widehat \R_m \widehat \T'_m)/c$.

  \vskip 4pt
  Update $\polPars_{m+1} = \big[ \polPars_m + \alpha_m \widehat H_m
  \big]_{\polParSpace}$.
}
\caption{RMC Algorithm with simultaneous perturbation based gradient estimates.}
\label{alg:SPSA}
\end{algorithm2e}

Substituting the above estimates in~\eqref{eq:H-est} and simplifying, we get
\[
  \widehat H_\polPars = \delta ( \widehat \T_\polPars \widehat \R_{\polPars + c\delta} 
    - 
  \widehat \R_\polPars \widehat \T_{\polPars + c \delta} )/c.
\]
The complete algorithm in shown in Algorithm~\ref{alg:SPSA}. Since $(\widehat
\R_\polPars, \widehat \T_\polPars)$ and $(\widehat \R_{\polPars + c \delta},
\widehat \T_{\polPars + c \delta})$ are estimated from separate sample paths,
$\widehat H_\polPars$ defined above is an unbiased estimator of $H_\polPars$.
Then, an immediate consequence of Theorem~\ref{thm:convergence} is the
following.
\begin{corollary}
  The sequence $\{\polPars_m\}_{m \ge 1}$ generated by
  Algorithm~\ref{alg:SPSA} converges to a local optimal.
\end{corollary}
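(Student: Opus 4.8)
The plan is to recognize the update in Algorithm~\ref{alg:SPSA} as an instance of the projected stochastic-approximation scheme~\eqref{eq:H-update} already analyzed in Theorem~\ref{thm:convergence}, and then to invoke that theorem. The only step requiring genuine care is certifying that $\widehat H_m$ estimates $H_{\polPars_m}$ well enough; unlike the likelihood-ratio estimator of Proposition~\ref{prop:estimator}, the simultaneous-perturbation estimator carries a finite-difference bias that must be quantified.

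First I would condition on the perturbation direction $\delta$. Because the unperturbed iterate $\polPars_m$ and the perturbed iterate $\polPars_m' = \polPars_m + c\delta$ are rolled out on two independent sample paths, the pairs $(\widehat\R_m,\widehat\T_m)$ and $(\widehat\R'_m,\widehat\T'_m)$ are conditionally independent given $\delta$, so $\EXP[\widehat\T_m\widehat\R'_m\mid\delta]=\T_{\polPars_m}\R_{\polPars_m+c\delta}$ and $\EXP[\widehat\R_m\widehat\T'_m\mid\delta]=\R_{\polPars_m}\T_{\polPars_m+c\delta}$. Writing $\phi(\polPars')=\T_{\polPars_m}\R_{\polPars'}-\R_{\polPars_m}\T_{\polPars'}$, this gives
\[
  \EXP[\widehat H_m\mid\delta] = \frac{\delta}{c}\,\phi(\polPars_m+c\delta).
\]
The key structural observation is that $\phi(\polPars_m)=0$ while $\GRAD_\polPars\phi(\polPars_m)=H_{\polPars_m}$, so $\EXP[\widehat H_m\mid\delta]$ is exactly a single-point smoothed-gradient estimate of $\phi$ at $\polPars_m$ with a built-in baseline. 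Averaging over $\delta$ and Taylor-expanding, the odd-order terms vanish by the symmetry of $\delta$ (valid for both the Rademacher and Gaussian choices) and $\EXP[\delta\delta^\top]=I$, yielding $\EXP[\widehat H_m]=H_{\polPars_m}+O(c^2)$.

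With this identification the remaining argument mirrors Theorem~\ref{thm:convergence}: decomposing $\widehat H_m = H_{\polPars_m} + M_{m+1} + b_m$ with $M_{m+1}$ a martingale-difference noise and $b_m=O(c^2)$ a vanishing bias, iteration~\eqref{eq:H-update} is a projected Robbins--Monro scheme whose mean field is $H_\polPars$, and, provided conditions (A1)--(A4) of~\cite{Borkar:book} hold, \cite[Theorem 2.2]{Borkar:book} gives almost sure convergence. The main obstacle is controlling the bias $b_m$: for a \emph{fixed} $c$ the scheme rigorously tracks the root of a $c$-smoothed surrogate, so the honest conclusion is convergence to an $O(c^2)$-neighborhood of a stationary point of $J_\polPars$. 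To recover an exact local optimum I would instead drive the perturbation size to zero along the iterations, replacing $c$ by a schedule $c_m\to0$ with $\sum_m(\alpha_m/c_m)^2<\infty$, which annihilates the bias while keeping the effective noise summable. A secondary point to verify, needed for the bounded-variance hypothesis on $M_{m+1}$, is that $\R^{(n)}$ and $\T^{(n)}$ have finite second moments; this follows from the assumed positive recurrence of the start state $\state_0$ under every $\policy_\polPars$.
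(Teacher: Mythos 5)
Your proposal reaches the corollary by the same top-level route as the paper---recognize the update in Algorithm~\ref{alg:SPSA} as an instance of the projected scheme~\eqref{eq:H-update} and invoke Theorem~\ref{thm:convergence} (i.e., Borkar's stochastic-approximation theorem)---but it treats rigorously the one step the paper disposes of in a single sentence. The paper's entire argument is the assertion that, because $(\widehat \R_\polPars, \widehat \T_\polPars)$ and $(\widehat \R_{\polPars + c\delta}, \widehat \T_{\polPars + c\delta})$ come from separate sample paths, $\widehat H_\polPars = \delta(\widehat \T_\polPars \widehat \R_{\polPars+c\delta} - \widehat \R_\polPars \widehat \T_{\polPars+c\delta})/c$ is an \emph{unbiased} estimator of $H_\polPars$, after which the corollary is declared an immediate consequence of Theorem~\ref{thm:convergence}. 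As your conditioning argument makes precise, independence of the two paths only yields $\EXP[\widehat H_m \mid \delta] = \delta\,\phi(\polPars_m + c\delta)/c$ with $\phi(\polPars') = \T_{\polPars_m}\R_{\polPars'} - \R_{\polPars_m}\T_{\polPars'}$; unbiasedness for $H_{\polPars_m}$ itself holds only up to the finite-difference error, which your Taylor expansion (using $\phi(\polPars_m)=0$, $\GRAD\phi(\polPars_m)=H_{\polPars_m}$, symmetry of $\delta$, and $\EXP[\delta\delta^\top]=I$) correctly bounds as $O(c^2)$. Accordingly, your conclusion is the honest one: for fixed $c$ the iterates converge only to an $O(c^2)$-neighborhood of a stationary point of $J_\polPars$ (equivalently, to a stationary point of a $c$-smoothed surrogate), and the exact convergence stated in the corollary requires driving the perturbation to zero, $c_m \to 0$ with $\sum_m (\alpha_m/c_m)^2 < \infty$, which is the classical Spall-type SPSA analysis. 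What the paper's route buys is brevity, at the cost of a claim of unbiasedness that is literally false for fixed $c>0$; what your route buys is a correct proof together with the explicit algorithmic modification (a decaying perturbation schedule) under which the corollary as stated actually holds. Two small caveats on your side: the Taylor/bias argument presupposes that $\R_\polPars$ and $\T_\polPars$ are sufficiently smooth in $\polPars$ (an assumption the paper makes implicitly whenever it differentiates them), and the second-moment condition you raise is even easier than you suggest, since $\T^{(n)} \le 1/(1-\discount)$ always and, for bounded per-step rewards, $\R^{(n)}$ is bounded as well.
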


\section{RMC for Post-Decision State Model} \label{sec:post_model}
In many models, the state dynamics can be split into two parts: a controlled evolution followed by an uncontrolled evolution. For example, many continuous state models have dynamics of the form
\[
S_{t+1} = f(S_t, A_t) + N_t,
\]
where $\{N_t\}_{t \ge 0}$ is an independent noise process. For other examples, see the inventory control and event-triggered communication models in Sec~\ref{sec:num_exp}. Such models can be written in terms of a post-decision state model described below.

Consider a post-decision state MDP with pre-decision state $\Prestate_t \in
\PRESTATE$, post-decision state $\Poststate_t \in \POSTSTATE$, action
$\Action_t \in \ACTION$.
The system starts at an initial state $\poststate_0 \in \POSTSTATE$ and at 
time~$t$: 
\begin{enumerate}
  \item there is a controlled transition from $\Prestate_t$ to $\Poststate_t$
    according to a transition kernel $\PRE P(\Action_t)$; 
  \item there is an uncontrolled transition from $\Poststate_t$ to
    $\Prestate_{t+1}$ according to a transition kernel $\POST P$;
  \item a per-step reward $R_t =  r(\Prestate_t, \Action_t,
    \Poststate_t)$ is received.
\end{enumerate}
Future is discounted at a rate $\discount \in (0,1)$. 
\begin{remark}
  When $\POSTSTATE = \PRESTATE$ and $\PRE P$ is identity, then the above model
  reduces to the standard MDP model, considered in Sec~\ref{sec:rl}. When
  $\POST P$ is a deterministic transition, the model reduces to a standard MDP
  model with post decision
  states~\cite{VanRoyBertsekasLeeEtAl1997,powell2011approximate}. 
\end{remark}

As in Sec~\ref{sec:rl}, we choose a (time-homogeneous and Markov) policy $\policy$ that maps the current pre-decision state $\PRESTATE$ to a
distribution on actions, i.e., $\Action_t \sim \policy(\Prestate_t)$. We use
$\policy(\action | \prestate)$ to denote 
$\PR(\Action_t = \action | \Prestate_t = \prestate)$. 

The performance when the system starts in post-decision state $\poststate_0 \in
\POSTSTATE$ and follows policy $\policy$ is given by
\begin{equation}
  J_\policy =  
  \EXPA\biggl[\sum_{t=0}^{\infty}\discount^{t}\Reward_t\biggm|\Poststate_0 =
  \poststate_0\biggr]. 
\end{equation}
As before, we are interested in identifying an optimal policy, i.e., a policy that maximizes the performance. When $\SPSTATE$ and $\ACTION$ are Borel spaces, we assume that the model satisfies the standard conditions under which time-homogeneous Markov policies are optimal~\cite{Hernandez-Lerma1996}.
Let $\tau^{(n)}$ denote the stopping times such that $\tau^{(0)} = 0$ and for
$n \ge 1$,
\[
  \tau^{(n)} = \inf \{ t > \tau^{(n-1)} : \poststate_{t-1} = \poststate_0 \}.
\]
The slightly unusual definition (using $\poststate_{t-1} = \poststate_0$
rather than the more natural $\poststate_t = \poststate_0$) is to ensure that
the formulas for $\R^{(n)}$ and $\T^{(n)}$ used in Sec.~\ref{sec:rl} remain
valid for the post-decision state model as well. Thus, using arguments similar to Sec.~\ref{sec:rl}, we can show that both variants of RMC
presented in Sec.~\ref{sec:rl} converge to a locally optimal parameter
$\polPars$ for the post-decision state model as well.

\section{Approximate RMC}\label{sec:approx_rl}

In this section, we present an approximate version of RMC (for the basic
model of Sec.~\ref{sec:rl}). Suppose that the state and action spaces
$\SPSTATE$ and $\ACTION$ are separable metric spaces (with metrics $d_S$ and
$d_A$).

Given an approximation constant $\rho \in \reals_{> 0}$, let $B^\rho = \{s \in \SPSTATE: d_S(s,s_0) \le \rho\}$ denote
the ball of radius $\rho$ centered around $s_0$. Given a policy $\policy$, let
$\tau^{(n)}$ denote the stopping times for successive visits to $B^\rho$,
i.e., $\tau^{(0)} = 0$ and for $n \ge 1$, 
\[
  \tau^{(n)} = \inf \{ t > \tau^{(n-1)} : \state_t \in B^\rho \}.
\]
Define $\R^{(n)}$ and $\T^{(n)}$ as in~\eqref{eq:Rn_and_Tn} and let
$\R^\rho_\polPars$ and $\T^\rho_\polPars$ denote the expected values of
$\R^{(n)}$ and $\T^{(n)}$, respectively. Define
\[
  J^\rho_\polPars = \frac{\R^\rho_\polPars}{ (1-\discount) \T^\rho_\polPars}.
\]

\begin{theorem}\label{thm:approx_RMC}
  Given a policy $\policy_\polPars$, let $V_\polPars$ denote the value
  function and $\overline \T^\rho_\polPars = \EXPB[ \discount^{\tau^{(1)}} |
  \State_0 = \state_0 ]$ (which is always less than $\discount$). Suppose the
  following condition is satisfied:
  \begin{enumerate}
    \item[\textup{(C)}] The value function $V_\polPars$ is locally Lipschitz
      in $B^\rho$, i.e., there exists a $L_\polPars$ such that for any $s, s'
      \in B^\rho$, 
      \[
        | V_\polPars(s) - V_\polPars(s') | \le L_\polPars d_S(s,s').
      \]
  \end{enumerate}
  Then
  \begin{equation}\label{eq:approxJ_bound}
    \big| J_\polPars - J^\rho_\polPars \big| \le 
    \frac{ L_\polPars \overline \T^\rho_\polPars } { (1-\discount)
    \T^\rho_\polPars} \rho \le \frac{\discount}{(1-\discount)} L_\polPars \rho.
  \end{equation}
\end{theorem}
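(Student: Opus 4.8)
The plan is to mimic the renewal decomposition behind Proposition~\ref{prop:renewal-basic1}, but to stop the trajectory at the first return to the ball $B^\rho$ rather than exactly at $\state_0$. The one essential difference is that, upon hitting $B^\rho$ at time $\tau^{(1)}$, the process restarts from the random state $\State_{\tau^{(1)}} \in B^\rho$ rather than from $\state_0$, so the continuation value is $V_\polPars(\State_{\tau^{(1)}})$ instead of $V_\polPars(\state_0) = J_\polPars$. Condition~(C) is precisely what controls this discrepancy, since $d_S(\State_{\tau^{(1)}}, \state_0) \le \rho$.

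First I would write $J_\polPars = V_\polPars(\state_0)$ and apply the strong Markov property at the stopping time $\tau^{(1)}$ to obtain
\[
  J_\polPars = \R^\rho_\polPars + \EXP\big[ \discount^{\tau^{(1)}} V_\polPars(\State_{\tau^{(1)}}) \bigm| \State_0 = \state_0 \big],
\]
where the first term equals $\R^\rho_\polPars$ because $\Gamma^{(1)} = \discount^{-\tau^{(0)}} = 1$, so the discounted reward collected before $\tau^{(1)}$ is precisely $\R^{(1)}$. Second, exactly as in the proof of Proposition~\ref{prop:renewal-basic1}, the geometric-series identity gives $(1 - \discount)\T^\rho_\polPars = 1 - \overline\T^\rho_\polPars$, whence the approximate renewal identity $J^\rho_\polPars = \R^\rho_\polPars + \overline\T^\rho_\polPars J^\rho_\polPars$.

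Third, I would add and subtract $J_\polPars$ inside the expectation above, using $\EXP[\discount^{\tau^{(1)}} \mid \State_0 = \state_0] = \overline\T^\rho_\polPars$, to get $J_\polPars(1 - \overline\T^\rho_\polPars) = \R^\rho_\polPars + \EXP[\discount^{\tau^{(1)}}(V_\polPars(\State_{\tau^{(1)}}) - J_\polPars) \mid \State_0 = \state_0]$. Dividing by $1 - \overline\T^\rho_\polPars = (1-\discount)\T^\rho_\polPars$ and subtracting the approximate renewal identity isolates
\[
  J_\polPars - J^\rho_\polPars = \frac{\EXP\big[ \discount^{\tau^{(1)}} \big( V_\polPars(\State_{\tau^{(1)}}) - V_\polPars(\state_0) \big) \bigm| \State_0 = \state_0 \big]}{(1-\discount)\T^\rho_\polPars}.
\]
Taking absolute values, moving them inside the expectation, and invoking Condition~(C) together with $d_S(\State_{\tau^{(1)}}, \state_0) \le \rho$ bounds the numerator by $L_\polPars \rho \, \EXP[\discount^{\tau^{(1)}} \mid \State_0 = \state_0] = L_\polPars \rho\, \overline\T^\rho_\polPars$, which is the first claimed inequality. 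The second inequality then follows from $\overline\T^\rho_\polPars \le \discount$ (since $\tau^{(1)} \ge 1$) and $\T^\rho_\polPars \ge 1$ (the cycle's discounted-time sum always contains the $\discount^0 = 1$ term).

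The main obstacle is the rigorous justification of the strong Markov decomposition in the first step: one must argue that conditioning on the stopped state $\State_{\tau^{(1)}}$ factors the post-$\tau^{(1)}$ return into $V_\polPars(\State_{\tau^{(1)}})$. This is legitimate because $\tau^{(1)}$ is a genuine stopping time (a hitting time of the closed set $B^\rho$) and because positive recurrence guarantees $\tau^{(1)} < \infty$ almost surely and the relevant quantities are integrable. Once this decomposition is in hand, the remainder is the same geometric-series bookkeeping as in Proposition~\ref{prop:renewal-basic1} plus the single Lipschitz estimate, so I anticipate no further difficulty.
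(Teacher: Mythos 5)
Your proposal is correct and follows essentially the same route as the paper's proof: a strong-Markov decomposition of $J_\polPars = V_\polPars(\state_0)$ at the first hitting time $\tau^{(1)}$ of $B^\rho$, the geometric-series identity $(1-\discount)\T^\rho_\polPars = 1 - \overline\T^\rho_\polPars$, and the local Lipschitz bound $|V_\polPars(\State_{\tau^{(1)}}) - V_\polPars(\state_0)| \le L_\polPars \rho$, followed by the observations $\overline\T^\rho_\polPars \le \discount$ and $\T^\rho_\polPars \ge 1$. If anything, your bookkeeping is slightly tidier than the paper's: by keeping $\discount^{\tau^{(1)}} V_\polPars(\State_{\tau^{(1)}})$ inside a single expectation (rather than factoring $\EXPB[\discount^{\tau^{(1)}} \mid \State_0 = \state_0]$ off the random continuation value) and bounding the exact identity for $J_\polPars - J^\rho_\polPars$ in absolute value, you obtain both directions of the bound at once, where the paper proves the upper bound and appeals to symmetry for the other direction.
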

\begin{proof}
  We follow an argument similar to Proposition~\ref{prop:renewal-basic1}.
  \begin{align}
    J_\polPars &= V_\polPars(s_0) = 
    \EXPB\bigg[
      \sum_{t=0}^{\tau^{(1)}-1} \discount^{t} R_t 
      \notag \\
      & \hskip 6em 
    +  \discount^{\tau^{(1)}}
    \sum_{t = \tau^{(1)}}^\infty \discount^{t-\tau^{(1)}} R_t 
    \biggm| \State_{0} = \state_{\tau^{(1)}} \bigg]
    \notag \\
    &\stackrel{(a)}= \R^\rho_\polPars + 
\EXPB[ \discount^{\tau^{(1)}} | \State_0 = \state_0]\, V_\polPars(\state_{\tau^{(1)}})
\label{eq:approx1}
  \end{align}
  where $(a)$ uses the strong Markov property.
  Since $V_\polPars$ is locally Lipschitz with constant $L_\polPars$ and
  $s_{\tau^{(1)}} \in B^\rho$, we have that
  \[
    |J_\polPars - V_\polPars(s_{\tau^{(1)}}) | = |V_\polPars(s_0) -
    V_\polPars(s_{\tau^{(1)}}) | \le L_\polPars \rho. 
  \]
  Substituting the above in~\eqref{eq:approx1} gives
  \[
    J_\polPars \le \R^\rho_\polPars + \overline \T^\rho_\polPars (J_\polPars +
    L_\polPars \rho).
  \]
  Substituting $\T^\rho_\polPars = (1 - \overline \T^\rho_\polPars)/(1 -
  \discount)$ and rearranging the terms, we get 
  \[
    J_\polPars \le J^\rho_\polPars + \frac{L_\polPars \overline
    \T^\rho_\polPars}{(1-\discount) \T^\rho_\polPars } \rho.
  \]
  The other direction can also be proved using a similar argument. The second
  inequality in~\eqref{eq:approxJ_bound} follows from $\overline \T^\rho_\polPars \le \gamma$ and ${\T^\rho_\polPars \ge 1}$.
\end{proof}

Theorem~\ref{thm:approx_RMC} implies that we can find an approximately optimal policy by identifying policy parameters $\polPars$ that minimize $J^\rho_\polPars$. To do so, we can appropriately modify both variants of
RMC defined in Sec.~\ref{sec:rl} to declare a renewal whenever the state lies
in $B^\rho$. 

For specific models, it may be possible to verify that the value function is
locally Lipschitz (see Sec.~\ref{sec:inv_ctrl} for an example). However, we
are not aware of general conditions that guarantee local Lipschitz
continuity of value functions. It is possible to identify sufficient conditions that guarantee global Lipschitz continuity of value functions (see~\cite[Theorem 4.1]{Hinderer2005},
\cite[Lemma 1, Theorem 1]{Rachelson2010}, \cite[Lemma 1]{Pirotta2015}). We state these conditions below.
\begin{proposition}\label{prop:Lispschitz}
  Let $V_\polPars$ denote the value function for any policy
  $\policy_{\polPars}$. Suppose the model satisfies the following conditions:
  \begin{enumerate}
    \item The transition kernel $P$ is Lipschitz, i.e., there
      exists a constant $L_P$ such that for all
      $s,s' \in \SPSTATE$ and $a,a' \in \ACTION$, 
      \[
        \mathcal K(P(\cdot | s,a), P(\cdot | s',a')) \le 
        L_P\big[ d_S(s,s') + d_A(a,a') \big],
      \]
      where $\mathcal K$ is the Kantorovich metric (also called
      Kantorovich-Monge-Rubinstein metric or Wasserstein distance) between
      probability measures.

    \item The per-step reward $r$ is Lipschitz, i.e., there exists a constant
      $L_r$ such that for all $s,s',s_+ \in \SPSTATE$ and $a,a' \in \ACTION$,
      \[
        | r(s,a,s_+) - r(s',a',s_+) | \le 
          L_r\big[ d_S(s,s') + d_A(a,a') \big].
      \]
  \end{enumerate}
  In addition, suppose the policy satisfies the following:
  \begin{enumerate}
    \setcounter{enumi}{2}
    \item The policy $\policy_\polPars$ is Lipschitz, i.e., there exists a
      constant $L_{\policy_\polPars}$ such that for any $s,s' \in \SPSTATE$,
      \[
        \mathcal K( \policy_\polPars(\cdot | s), \policy_\polPars(\cdot | s')) 
        \le
        L_{\policy_\polPars}\, d_S(s,s').
      \]
    \item $\discount L_P(1 + L_{\policy_\polPars}) < 1$.
    \item The value function $V_\polPars$ exists and is finite. 
  \end{enumerate}
  Then, $V_\polPars$ is Lipschitz. In particular, for any $s, s' \in
  \SPSTATE$,
  \[
    | V_\polPars(s) - V_\polPars(s') | \le L_\polPars d_S(s,s'),
  \]
  where
  \[
    L_\polPars = \frac{L_r (1 + L_{\policy_\polPars})}
    {1 - \discount L_P(1 + L_{\policy_\polPars}) }.
  \]
\end{proposition}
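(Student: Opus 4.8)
The plan is to exploit the fact that $V_\polPars$ is the unique fixed point of the Bellman operator $\mathcal{B}_\polPars$ for the fixed policy $\policy_\polPars$, and to show that this operator maps Lipschitz functions to Lipschitz functions while contracting the associated Lipschitz constant. Concretely, writing $Q_\polPars(s,a) = \EXP_{s_+ \sim P(\cdot|s,a)}[r(s,a,s_+) + \discount V_\polPars(s_+)]$ and $V_\polPars(s) = \EXP_{a \sim \policy_\polPars(\cdot|s)}[Q_\polPars(s,a)]$, I would run the argument as a self-consistency computation: assume $V_\polPars$ is $L$-Lipschitz, propagate this through one application of the recursion, and read off the smallest $L$ for which the bound closes. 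The single analytic tool driving every estimate is the dual characterization of the Kantorovich metric, namely that $|\int f\,d\mu - \int f\,d\nu| \le \mathrm{Lip}(f)\,\mathcal{K}(\mu,\nu)$ for any Lipschitz $f$; conditions 1 and 3 are stated in exactly the form needed to apply it to the transition kernel and to the policy.

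First I would bound the Lipschitz constant of $Q_\polPars$ jointly in $(s,a)$, assuming $V_\polPars$ is $L$-Lipschitz. Splitting $Q_\polPars(s,a) - Q_\polPars(s',a')$ by adding and subtracting $\EXP_{s_+\sim P(\cdot|s,a)}[r(s',a',s_+)+\discount V_\polPars(s_+)]$, the first piece is an expectation of $r(s,a,s_+)-r(s',a',s_+)$ at a common $s_+$, controlled by condition 2 as $L_r[d_S(s,s')+d_A(a,a')]$; the second piece is a difference of integrals of the fixed function $s_+ \mapsto r(s',a',s_+)+\discount V_\polPars(s_+)$ against $P(\cdot|s,a)$ and $P(\cdot|s',a')$, which Kantorovich duality together with condition 1 bounds by its $s_+$-Lipschitz constant times $L_P[d_S(s,s')+d_A(a,a')]$. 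This yields $\mathrm{Lip}(Q_\polPars) \le L_r + \discount L\, L_P$ (jointly, with respect to $d_S + d_A$).

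Next I would bound $|V_\polPars(s)-V_\polPars(s')|$ by splitting it into a policy-variation term $\int[\policy_\polPars(a|s)-\policy_\polPars(a|s')]Q_\polPars(s,a)\,da$ and a state-variation term $\int \policy_\polPars(a|s')[Q_\polPars(s,a)-Q_\polPars(s',a)]\,da$. Applying Kantorovich duality with condition 3 to the first (using that $Q_\polPars(s,\cdot)$ is $\mathrm{Lip}(Q_\polPars)$-Lipschitz in $a$) and the joint bound to the second gives $|V_\polPars(s)-V_\polPars(s')| \le (L_r+\discount L L_P)(1+L_{\policy_\polPars})\,d_S(s,s')$. Setting this equal to $L\,d_S(s,s')$ gives the fixed-point equation $L = (L_r+\discount L L_P)(1+L_{\policy_\polPars})$, whose solution is exactly the stated $L_\polPars$; condition 4 is precisely what makes the affine map $L \mapsto (L_r+\discount L L_P)(1+L_{\policy_\polPars})$ a contraction, so that $1-\discount L_P(1+L_{\policy_\polPars})>0$ and $L_\polPars$ is finite and nonnegative. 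To turn the self-consistency heuristic into a proof I would run value iteration $V_{k+1} = \mathcal{B}_\polPars V_k$ from $V_0 \equiv 0$: the Lipschitz constants $L_k$ obey $L_{k+1} = (L_r + \discount L_k L_P)(1+L_{\policy_\polPars})$ with $L_0 = 0$, so $L_k \uparrow L_\polPars$ from below, every $V_k$ is $L_\polPars$-Lipschitz, and since $V_k \to V_\polPars$ uniformly (the operator is a $\discount$-contraction in sup norm and $V_\polPars$ is finite by condition 5) the limit inherits the $L_\polPars$-Lipschitz bound, the class of $L_\polPars$-Lipschitz functions being closed under pointwise limits.

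The delicate point, which I would flag, is the bookkeeping of the reward in the $s_+$-integral piece. The stated constant carries only a single factor of $L_r$ in the numerator, which is consistent precisely because the reward difference is evaluated at a common next state $s_+$ (hence contributes only $L_r$ through condition 2) while the genuinely $s_+$-dependent function passed through the kernel difference is the discounted value term $\discount V_\polPars$. If instead one allowed $r(s',a',\cdot)$ to vary with $s_+$ and passed it through the Kantorovich bound, an extra $L_r L_P$ contribution would appear and the numerator would pick up a spurious factor $(1+L_P)$. Keeping these two roles of the reward separate is the main obstacle to reproducing the constant exactly, and it is the step where I would be most careful about the order in which terms are added and subtracted.
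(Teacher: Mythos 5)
The paper itself contains no proof of this proposition: it is stated as a summary of \cite[Theorem 4.1]{Hinderer2005}, \cite[Lemma 1, Theorem 1]{Rachelson2010}, and \cite[Lemma 1]{Pirotta2015}, so there is no internal argument to compare against. Your scheme---view $V_\polPars$ as the fixed point of the policy Bellman operator, use Kantorovich duality to propagate a Lipschitz bound through one application, close the recursion $L=(L_r+\discount L_P L)(1+L_{\policy_\polPars})$, and make it rigorous by value iteration---is exactly the argument used in those references, and the bookkeeping that produces the stated $L_\polPars$ from the recursion is correct.

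However, there is a genuine gap, and it sits precisely at the step you flag and then dismiss as a matter of careful adding and subtracting. In your bound $\mathrm{Lip}(Q_\polPars)\le L_r+\discount L\,L_P$, the function pushed through the kernel difference is $s_+\mapsto r(s',a',s_+)+\discount V_\polPars(s_+)$, so Kantorovich duality charges you $\mathrm{Lip}_{s_+}\bigl(r(s',a',\cdot)\bigr)+\discount L$, and condition~2 gives \emph{no} control on $\mathrm{Lip}_{s_+}\bigl(r(s',a',\cdot)\bigr)$: it bounds the variation of $r$ only in $(s,a)$ with the third argument held fixed. No reordering can avoid this: in any decomposition of
\[
\int r(s,a,s_+)\,P(ds_+|s,a)-\int r(s',a',s_+)\,P(ds_+|s',a')
\]
into a common-measure piece and a common-integrand piece, the common-integrand piece is an integral of a reward section against the signed measure $P(\cdot|s,a)-P(\cdot|s',a')$, whose size depends on the reward's $s_+$-variation. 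Indeed, under the hypotheses as written the proposition is false: take a single action, $P(\cdot|s,a)=\delta_{s/2}$ (so $L_P=\tfrac12$ and condition~4 holds), and $r(s,a,s_+)=\IND_{\{s_+\ge 0\}}$, which satisfies condition~2 with $L_r=0$ since the difference at a common $s_+$ vanishes; then $V_\polPars(s)=\IND_{\{s\ge 0\}}/(1-\discount)$ is finite but discontinuous, while the claimed bound would force it to be constant. Your proof (and the stated constant) is valid exactly when the reward does not depend on the next state, $r=r(s,a)$---the setting of all three cited references---or under the reading that the expected reward $\bar r(s,a)=\EXP_{s_+\sim P(\cdot|s,a)}[r(s,a,s_+)]$ is $L_r$-Lipschitz; if instead one assumes $r$ Lipschitz in all three arguments, the numerator becomes $L_r(1+L_P)(1+L_{\policy_\polPars})$, as you anticipated. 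So the repair is a strengthened hypothesis, not better bookkeeping. A minor secondary point: with Lipschitz (hence possibly unbounded) rewards the sup-norm contraction argument for $V_k\to V_\polPars$ is unavailable; but pointwise convergence suffices, since an $L$-Lipschitz bound survives pointwise limits, which is all your final step needs.
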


\section{Numerical Experiments}\label{sec:num_exp}

We conduct three experiments to evaluate the performance of RMC: a randomly
generated MDP, event-triggered communication, and inventory
management.  

\subsection{Randomized MDP (GARNET)} \label{sec:GARNET}

In this experiment, we study a randomly generated $\text{GARNET}(100,10,50)$ 
model~\cite{Bhatnagar2009}, which is an MDP with $100$ states, $10$ actions,
and a branching factor of $50$ (which means that each row of all transition
  matrices has $50$ non-zero elements, chosen $\text{Unif}[0,1]$ and
normalized to add to~$1$). For each state-action pair, with probability
$p=0.05$, the reward is chosen $\text{Unif}[10,100]$, and with probability
$1-p$, the reward is~$0$. Future is discounted by a factor of
$\discount=0.9$. The first state is chosen as start state. The policy is a Gibbs soft-max distribution
parameterized by $100 \times 10$ (states $\times$ actions) parameters, where
each parameter belongs to the interval $[-30, 30]$. The temperature of the
Gibbs distribution is kept constant and equal to~$1$.

We compare the performance of RMC, RMC with biased gradient (denoted by
RMC-B, see Remark~\ref{rem:2}), and actor critic with eligibility
traces for the critic~\cite{sutton1998reinforcement} (which we refer to as
SARSA-$\lambda$ and abbreviate as S-$\lambda$ in the plots),
with $\lambda \in \{0, 0.25, 0.5, 0.75, 1\}$.
 For both the RMC algorithms, we use the same runs to estimate the gradients
 (see Remark~\ref{rem:single_run} in Sec.~\ref{sec:rl}).
\def\PARAMS{For
all algorithms, the learning rate is chosen using ADAM~\cite{ADAM} with
default hyper-parameters and the $\alpha$ parameter of ADAM equal to $0.05$
for RMC, RMC-B, and the actor in SARSA-$\lambda$ and the learning rate is equal to $0.1$ for the
critic in SARSA-$\lambda$. For RMC and RMC-B, the policy parameters are
updated after $N=5$ renewals.}
Each algorithm\footnote{\PARAMS} is run $100$ times and the mean and standard deviation of the
performance (as estimated by the algorithms themselves) is shown in
Fig.~\ref{fig:GARNET-RL-train}. The performance of the corresponding policy
evaluated by Monte-Carlo evaluation over a horizon of $250$~steps and averaged
over $100$~runs is shown in Fig.~\ref{fig:GARNET-RL-eval}. The optimal
performance computed using value iteration is also shown.

The results show that SARSA-$\lambda$ learns faster (this is expected because
the critic is keeping track of the entire value function) but has higher
variance and gets stuck in a local minima. On the other hand, RMC and RMC-B
learn slower but have a low bias and do not get stuck in a local minima. The
same qualitative behavior was observed for other randomly generated models. Policy gradient algorithms only guarantee convergence to a local optimum. We are not sure why RMC and SARSA differ in which local minima they
converge to. Also, it was observed that RMC-B (which is RMC with biased evaluation of the gradient)
learns faster than RMC.

\begin{figure}[!t!b]
  \centering
    \begin{subfigure}{1.0\linewidth}
    \centering
    \includegraphics[width=\linewidth]{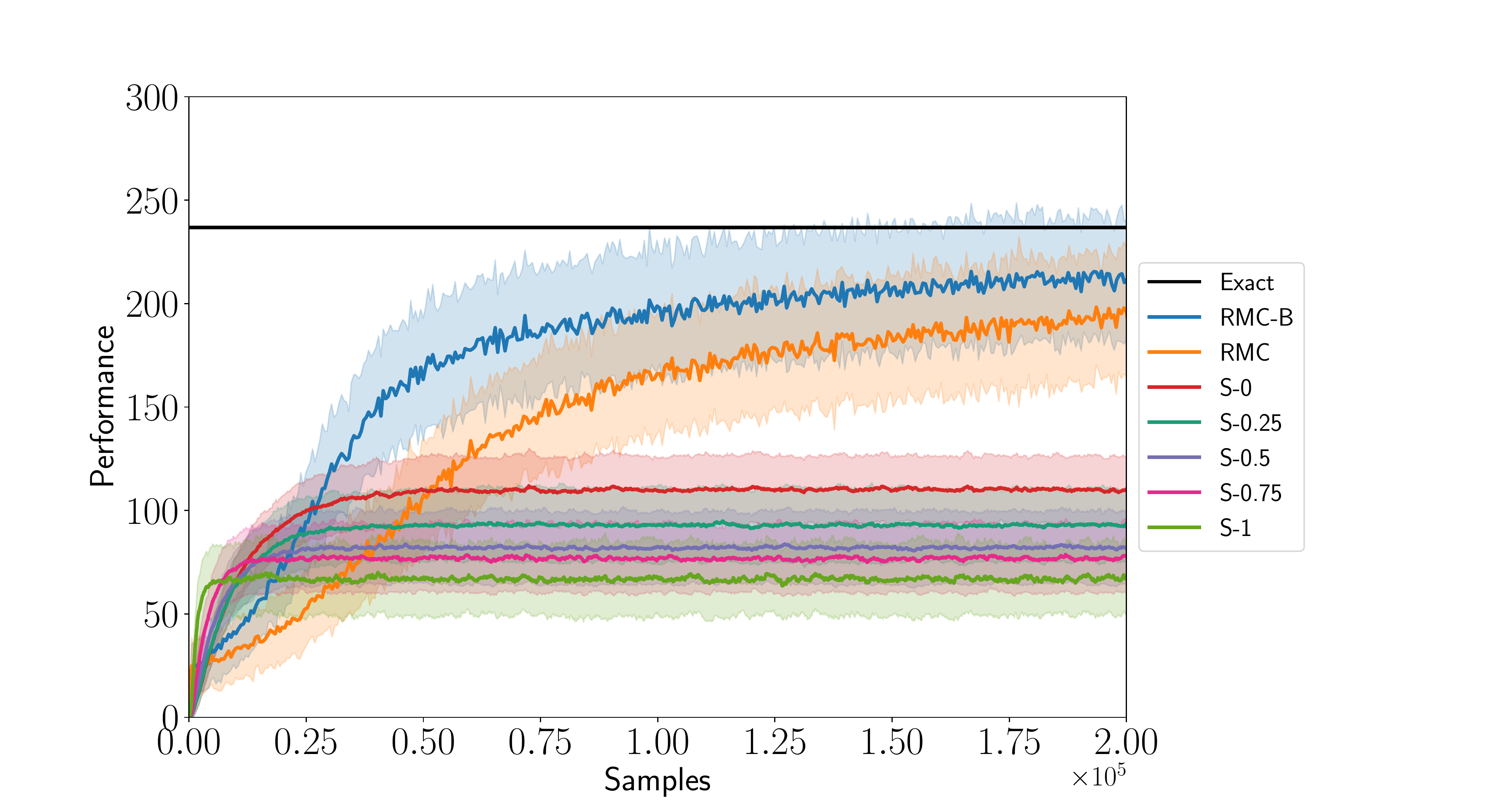}
    \caption{}
    \label{fig:GARNET-RL-train}
  \end{subfigure}
  \begin{subfigure}{1.0\linewidth}
    \centering
    \includegraphics[width=\linewidth]{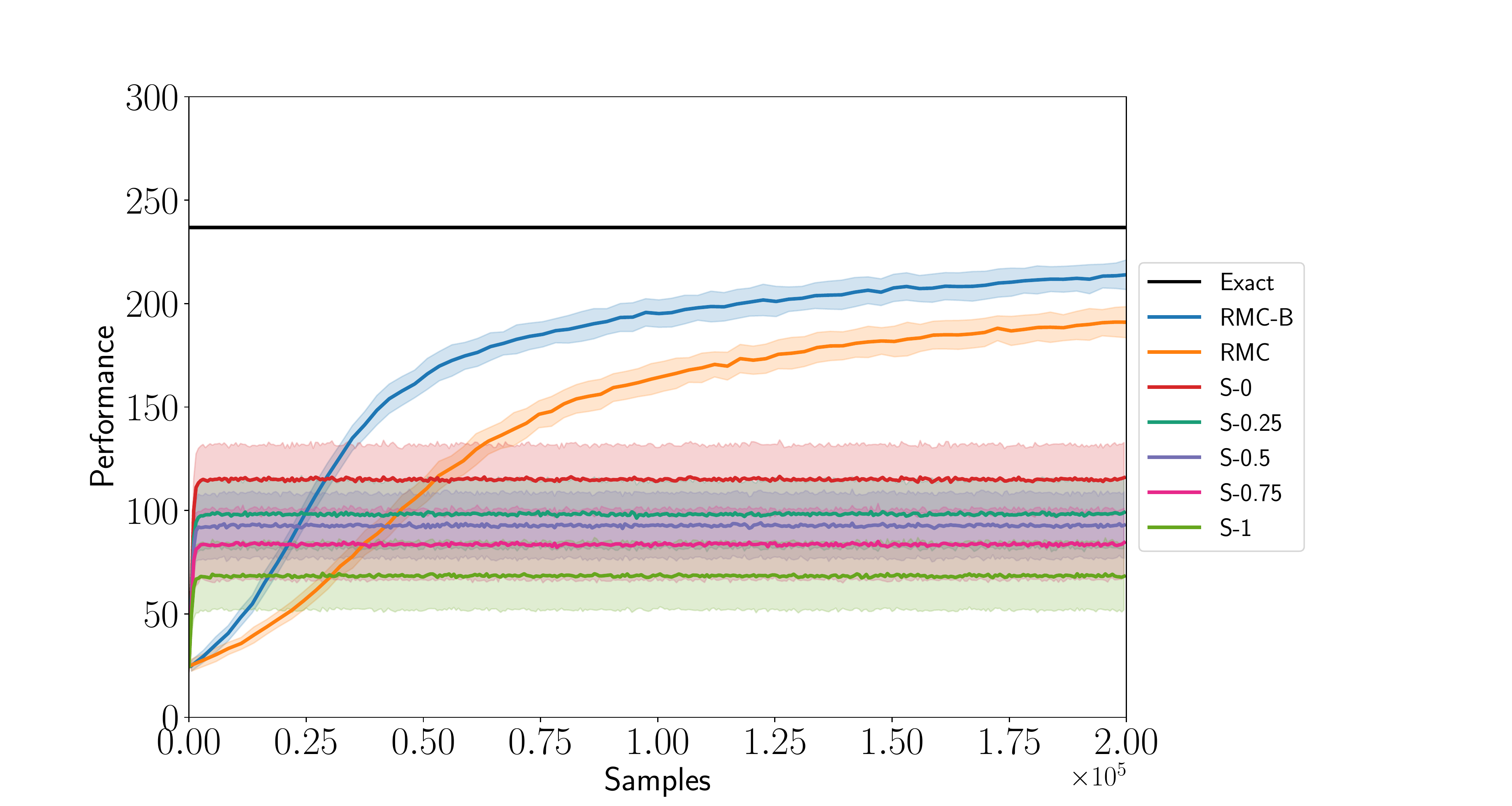}
    \caption{}
    \label{fig:GARNET-RL-eval}
  \end{subfigure}
    \caption{Performance of different learning algorithms on
      $\text{GARNET}(100,10,50)$ with $p=0.05$ and $\discount=0.9$. 
      (a)~The performance estimated by the algorithms online. (b)~The
      performance estimated by averaging over $100$ Monte Carlo evaluations
    for a rollout horizon of $250$. The solid lines show the mean value and the shaded region shows the $\pm$ one standard deviation region.}
\end{figure}

\subsection{Event-Triggered Communication} \label{sec:rem_est}

\begin{figure}[!t!b]
  \centering
  \renewcommand\unitlength{cm}
  \includegraphics[width=1.0\linewidth]{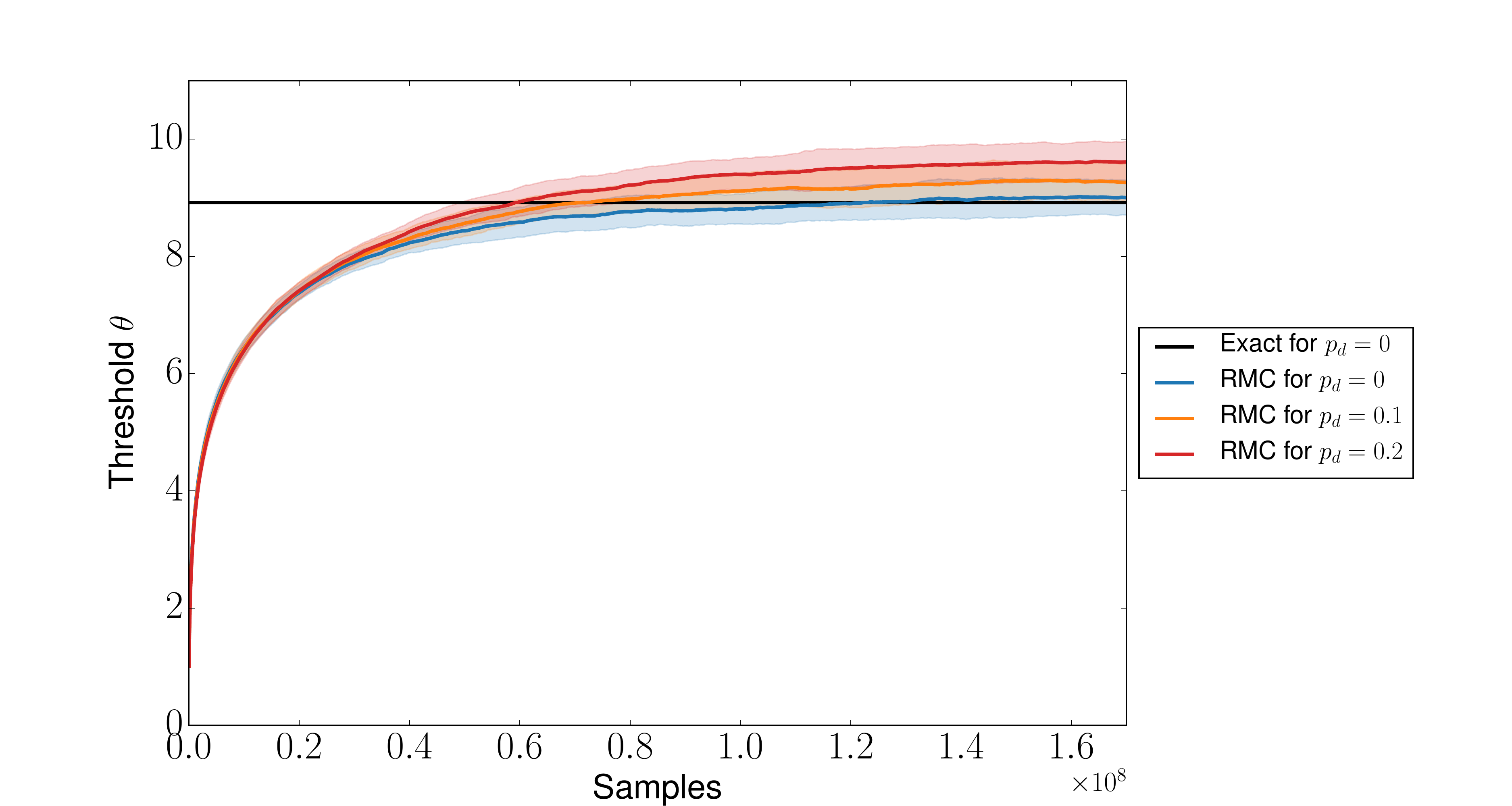}
  \caption{Policy parameters versus number of samples (sample values averaged over 100 runs) for event-driven
    communication using RMC for different values of $p_d$. The solid lines
  show the mean value and the shaded area shows the $\pm$ one standard
  deviation region.}
  \label{fig:RE}
\end{figure}

In this experiment, we study an event-triggered communication problem that arises in networked control systems~\cite{LipsaMartins:2011,CSM:thresholds}. A transmitter observes a first-order autoregressive process $\{X_t\}_{t \ge 1}$, i.e., $X_{t+1} =
\alpha X_t + W_t$, where $\alpha, X_t, W_t \in \reals$, and $\{W_t\}_{t \ge 1}$ is
an i.i.d.\ process. At each time, the transmitter uses an event-triggered
policy (explained below) to determine whether to transmit or not (denoted by
$A_t = 1$ and $A_t = 0$, respectively). Transmission takes place over an
i.i.d.\ erasure channel with erasure probability $p_d$. Let $\Prestate_t$ and
$\Poststate_t$ denote the ``error'' between the source realization and it's
reconstruction at a receiver. It can be shown that $\Prestate_t$ and
$\Poststate_t$ evolve as follows~\cite{LipsaMartins:2011,CSM:thresholds}: when $A_t = 0$,
$\Poststate_t = \Prestate_t$; when $A_t = 1$, $\Poststate_t = 0$ if the
transmission is successful (w.p. $(1-p_d)$) and $\Poststate_t = \Prestate_t$ if
the transmission is not successful (w.p. $p_d$); 
and $\Prestate_{t+1} = \alpha \Poststate_t + W_t$. Note that this is a post-decision state model, where the post-decision
state resets to zero after every successful transmission.\footnote{Had we used
the standard MDP model instead of the post-decision state model,
this restart would not have always resulted in a renewal.}%

The per-step cost has two components: a communication cost of
$\lambda A_t$, where $\lambda \in \reals_{> 0}$ and an estimation error 
$(\Poststate_t)^2$. The objective is to minimize the expected discounted cost. 

An event-triggered policy is a threshold policy that chooses $A_t = 1$
whenever $|\Prestate_t| \ge \polPars$, where $\polPars$ is a design choice.
Under certain conditions, such an event-triggered policy is known to be
optimal~\cite{LipsaMartins:2011,CSM:thresholds}. When the system model is known, algorithms to compute the optimal $\polPars$ are presented
in~\cite{XuHes2004,CM:remote-estimation}. In this section, we use RMC to
identify the optimal policy when the model parameters are not known. 

In our experiment we consider an event-triggered model with $\alpha = 1$,
$\lambda = 500$, $p_d \in \{0, 0.1, 0.2\}$, $W_t \sim {\cal N}(0, 1)$,
$\discount = 0.9$, and use simultaneous perturbation variant of RMC\footnote{An
event-triggered policy is a parametric policy but $\policy_\polPars(\action |
\prestate)$ is not differentiable in $\polPars$. Therefore, the likelihood
ratio method cannot be used to estimate performance gradient.} to identify
$\polPars$. We run the algorithm 100 times and the result for different
choices of $p_d$ are shown in Fig.~\ref{fig:RE}.\footnote{We choose the
learning rate using ADAM with default hyper-parameters and the $\alpha$
parameter of ADAM equal to 0.01. We choose $c = 0.3$, $N=100$ and $\Delta =
\mathcal{N}(0,1)$ in Algorithm~\ref{alg:SPSA}.} For $p_d = 0$, the optimal
threshold computed using~\cite{CM:remote-estimation} is also shown.
The results show that RMC converges relatively quickly and has low bias across multiple runs.

\subsection{Inventory Control} \label{sec:inv_ctrl}

In this experiment, we study an inventory management problem that arises in operations
research~\cite{Arrow1951,Bellman1955}. Let $S_t \in \subset \reals$ denote
the volume of goods stored in a warehouse, $A_t \in \reals_{\ge 0}$ denote the
amount of goods ordered, and $D_t$ denotes the demand. The state evolves
according to $S_{t+1} = S_t + A_t - D_{t+1}$. 

We work with the normalized cost function:
\[C(s) = a_p s (1-\discount)/\discount + a_h s \IND_{\{ s \ge 0\}} - a_b s \IND_{\{s < 0\}}, \] 
where $a_p$ is the procurement cost, $a_h$ is the holding cost, and $a_b$ is the backlog cost (see~\cite[Chapter 13]{Whittle1982}
for details).

It is known that there exists a threshold $\theta$ such that the optimal
policy is a base stock policy with threshold $\theta$ (i.e., whenever the
current stock level falls below $\theta$, one orders up to $\theta$).
Furthermore, for $s \le \theta$, we have that~\cite[Sec~13.2]{Whittle1982}
\begin{equation}\label{eq:opt-IC}
  V_\polPars(s) = C(s) + \frac{\discount}{(1-\discount)} \EXP[C(\polPars - D) ].
\end{equation}
So, for $B^\rho \subset (0, \theta)$, the value function is locally Lipschitz, with
\[
  L_\polPars = \left( a_h + \frac{1 - \discount} {\discount} a_p \right).
\]
So, we can use
approximate RMC to learn the optimal policy.

In our experiments, we consider an inventory management model with $a_h = 1$, $a_b
= 1$, $a_p = 1.5$, $D_t \sim \text{Exp}(\lambda)$ with $\lambda = 0.025$, start
state $s_0 = 1$, discount factor $\discount = 0.9$, and use
simultaneous perturbation variant of approximate RMC to identify $\theta$. 
We
run the algorithm $100$ times and the result is shown in
Fig.~\ref{fig:inv_ctl-RL}.\footnote{We choose the learning rate using ADAM
with default hyper-parameters and the $\alpha$ parameter of ADAM equal to
0.25. We choose $c = 3.0$, $N=100$, and $\Delta = \mathcal{N}(0,1)$ in
Algorithm~\ref{alg:SPSA} and choose $\rho = 0.5$ for approximate RMC\@. We bound the states within $[-100.0, 100.0]$.} The
optimal threshold and performance computed using~\cite[Sec 13.2]{Whittle1982}%
\footnote{For $\text{Exp}(\lambda)$ demand,
  the optimal threshold is (see~\cite[Sec 13.2]{Whittle1982})
  \[\polPars^* = \frac 1\lambda 
    \log\left( \frac{a_h + a_b}{a_h + a_p(1-\gamma)/\gamma)} \right).\]
}
is also shown. 
The result shows that RMC converges to an approximately optimal parameter value with total cost within the bound predicted in Theorem~\ref{thm:approx_RMC}.

\begin{figure}[!t!b]
  \centering
    \begin{subfigure}{1.0\linewidth}
    \centering
    \includegraphics[width=\linewidth]{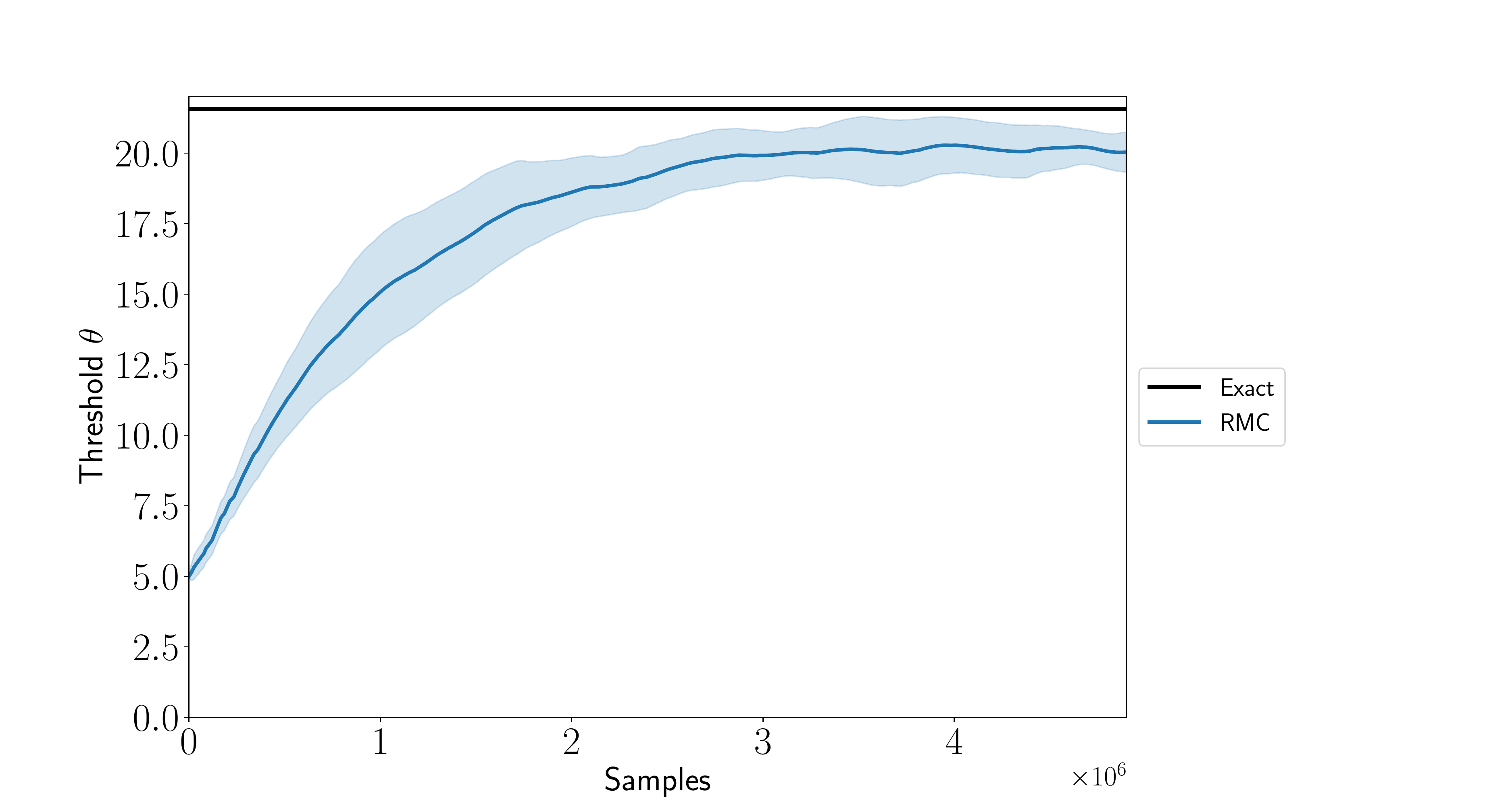}
    \caption{}
    \label{fig:inv_ctl-RL_threshold}
  \end{subfigure}
  \begin{subfigure}{1.0\linewidth}
    \centering
    \includegraphics[width=\linewidth]{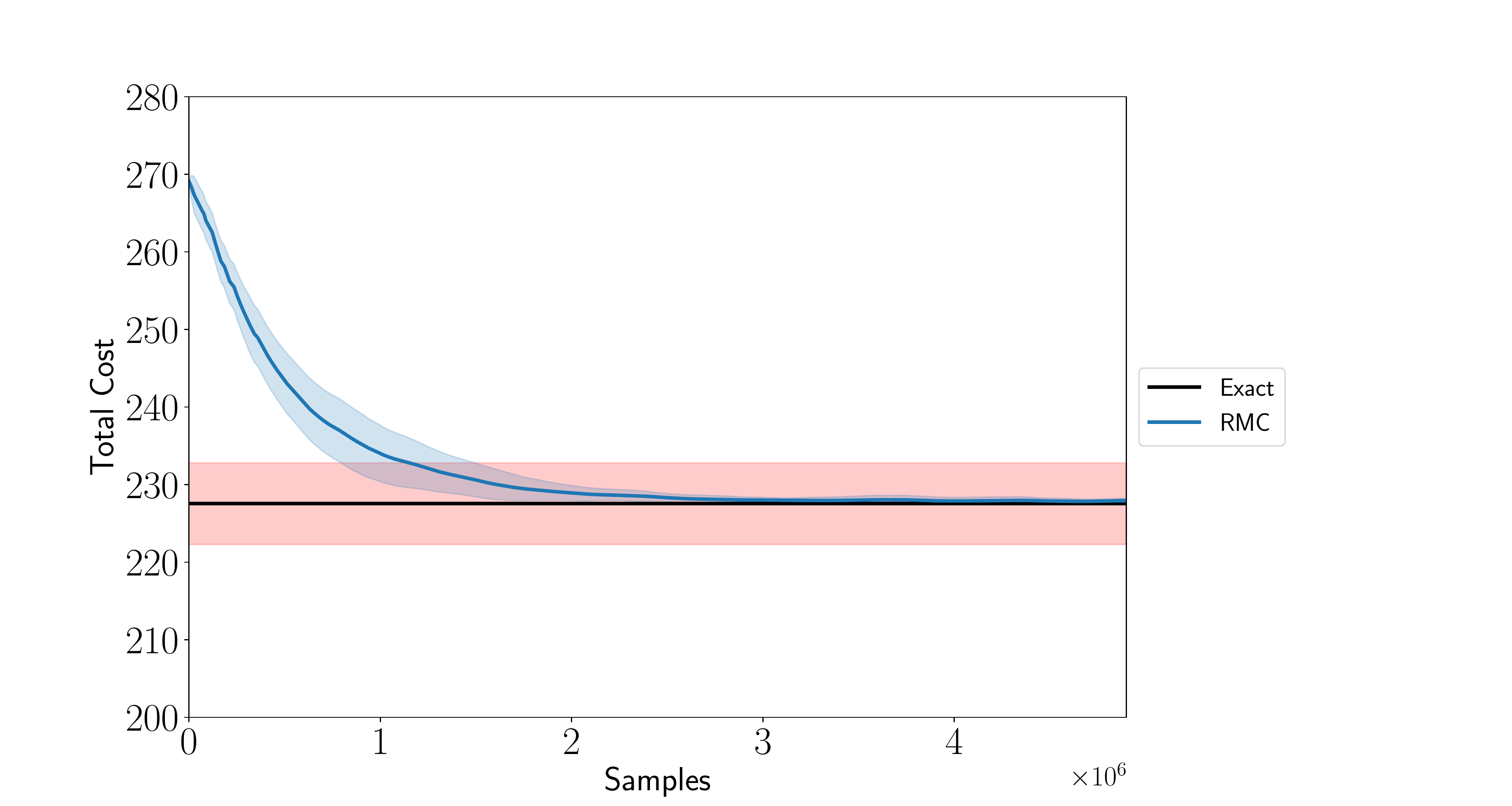}
    \caption{}
    \label{fig:inv_ctl-RL_perf}
  \end{subfigure}
    \caption{(a) Policy parameters and (b) Performance (total cost) versus
      number of samples (sample values averaged over 100 runs) for inventory
      control using RMC\@. The solid lines show the mean value and the shaded area shows the $\pm$ one standard
deviation region. In (b), the performance is computed using~\eqref{eq:opt-IC} for the policy parameters given in (a). The red rectangular region shows the total cost bound given by Theorem~\ref{thm:approx_RMC}.}
\label{fig:inv_ctl-RL}
\end{figure}

\section{Conclusions}

We present a renewal theory based reinforcement learning algorithm called
Renewal Monte Carlo. RMC retains the key advantages of Monte Carlo methods and
has low bias, is simple and easy to implement, and works for models with
continuous state and action spaces. In addition, due to the averaging over
multiple renewals, RMC has low variance. We generalized the
RMC algorithm to post-decision state models and also presented a variant that converges faster to an approximately optimal policy, where the renewal state is replaced by a renewal set. The error in using such an approximation is bounded by the size of the renewal set.

In certain models, one is interested in the peformance at a reference state
that is not the start state. In such models, we can start with an arbitrary
policy and ignore the trajectory until the reference state is visited for
the first time and use RMC from that time onwards (assuming that the reference
state is the new start state).

The results presented in this paper also apply to average reward models where
the objective is to maximize
\begin{equation}
  J_\pi = 
  \lim_{t_h \to
  \infty}\frac{1}{t_h}\EXPA\biggl[\sum_{t=0}^{t_h-1}\Reward_t\biggm|\State_0 =
  \state_0\biggr]. \label{eq:avg_Vp-defn}
\end{equation}
Let the stopping times $\tau^{(n)}$ be defined as before. Define the total reward
$\R^{(n)}$ and duration $\T^{(n)}$ of the $n$-th regenerative cycle as
\[
  \R^{(n)} = 
  \smashoperator[r]{\sum_{t = \tau^{(n-1)}}^{\tau^{(n)} - 1}} R_t
  \quad\text{and}\quad
  \T^{(n)} =  \tau^{(n)} - \tau^{(n-1)}.
\]
Let $\R_\polPars$ and $\T_\polPars$ denote the expected values of $\R^{(n)}$
and $\T^{(n)}$ under policy $\policy_{\polPars}$. Then from standard renewal
theory we have that the performance $J_\polPars$ is equal to 
$\R_\polPars/ \T_\polPars$ and, therefore $\GRAD_\polPars J_\polPars =
H_\polPars/T^2_\polPars$, where $H_\polPars$ is defined as in~\eqref{eq:H}. We
can use both variants of RMC prosented in Sec.~\ref{sec:rl} to obtain
estimates of $H_\polPars$ and use these to update the policy parameters
using~\eqref{eq:H-update}.

\section*{Acknowledgment}

The authors are grateful to Joelle Pineau for useful feedback and for suggesting the idea of approximate RMC.

\bibliographystyle{IEEEtran}
\bibliography{IEEEabrv,rmc_tac}

\end{document}